\documentclass[twoside]{article}
\usepackage[preprint]{aistats2027}

\usepackage[numbers]{natbib}

\usepackage[utf8]{inputenc}
\usepackage[T1]{fontenc}
\usepackage{xr-hyper} \usepackage{hyperref}
\usepackage{url}
\usepackage{booktabs}
\usepackage{multirow}
\usepackage{amsfonts}
\usepackage{nicefrac}
\usepackage{microtype}
\usepackage{xcolor}
\usepackage{amsmath,amssymb}
\usepackage{amsthm}
\usepackage{bbm}
\usepackage{graphicx}
\usepackage{algorithm}
\usepackage{algorithmicx}
\usepackage{algpseudocode}
\usepackage{mathtools}
\usepackage{enumitem}
\usepackage[most]{tcolorbox}

\newtheorem{theorem}{Theorem}

\newtheorem{corollary}{Corollary}
\newtheorem{definition}{Definition}

\newtheorem{assumption}{Assumption}

\newcommand{\Fil}{\mathcal{F}}
\newcommand{\ARL}{\mathrm{ARL}_0}
\newcommand{\EDD}{\mathrm{EDD}}

\newcommand{\llmconcept}{\texttt{LLM-concept}}
\newcommand{\embeddingarm}{\texttt{Embedding}}
\newcommand{\saearm}{\texttt{SAE}}
\newcommand{\ensemblearm}{\texttt{Ensemble}}

\newcommand{\informedshift}{\texttt{Informed-shift}}
\newcommand{\unrelatedshift}{\texttt{Unrelated-shift}}
\newcommand{\centroidarm}{\texttt{Centroid}}
\newcommand{\oraclearm}{\texttt{Oracle}}
\newcommand{\E}{\mathbb{E}_{H_0}}
\newcommand{\Ej}{E^{(j)}}

\newcommand{\rj}{r^{(j)}}
\newcommand{\SR}{\text{SR}}
\newcommand{\CU}{\text{CU}}

\newcommand{\Pm}{\hat{P}_m}

\begin{document}

\runningtitle{Learning to Detect Shifts in Unstructured Data Streams via Monitoring-by-Betting}
\runningauthor{Zhou, Ouyang, Xia, Subbaswamy, Prinster, Saria, Garcia-Agundez, Pollet, Yazdany, Vossler, Feng}
\twocolumn[
\aistatstitle{
Learning to Detect Shifts in Unstructured Data Streams\\via Monitoring-by-Betting
}

\vspace{-1cm}
\aistatsauthor{
  \textbf{Siyu Zhou}\textsuperscript{1}, \textbf{Jialin Ouyang}\textsuperscript{1}, \textbf{Fan Xia}\textsuperscript{1}, \textbf{Adarsh Subbaswamy}\textsuperscript{2}, \textbf{Drew Prinster}\textsuperscript{3}, \textbf{Suchi Saria}\textsuperscript{3},  \\
  \textbf{Augusto Garcia-Agundez}\textsuperscript{1}, \textbf{Sarah Pollet}\textsuperscript{1}, \textbf{Jinoos Yazdany}\textsuperscript{1}, \textbf{Patrick Vossler}\textsuperscript{1,$\dagger$}, \textbf{Jean Feng}\textsuperscript{1,$\dagger$}}
\aistatsaddress{\textsuperscript{1}University of California, San Francisco \quad \textsuperscript{2}University of Maryland \quad \textsuperscript{3}Johns Hopkins University} ]

\begin{abstract}
Continuous monitoring of generative AI systems requires detecting distribution shifts in the unstructured data they ingest and generate.
Existing monitoring methods typically control false alarms by requiring the discrepancy measure used to detect shifts to be \textit{pre-specified}.
However, for unstructured data, pre-specification creates a sensitivity--breadth trade-off: a narrowly defined measure is sensitive to the anticipated shift but may miss unexpected changes, whereas a broadly defined measure may detect a wider range of shifts but with reduced sensitivity.
Although adaptively hypothesizing the shift direction can overcome this limitation, such data-dependent adaptation generally invalidates conventional false-alarm guarantees.
We reconcile adaptivity with statistical validity through testing-by-betting, reframing sequential changepoint detection as a betting game in which the bettor continually updates its hypothesized shift direction, or witness function, as new observations arrive.
We introduce \textit{asymptotic e-detectors}, which approximate the pre-change distribution using a calibration set of size $m$, and prove that they asymptotically control the average run length to a false alarm as $m$ increases.
Empirically, calibration sets as small as $m=30$ provide effective control.
We then develop adaptive monitoring strategies ranging from black-box bettors that learn from embeddings to interpretable bettors based on LLM-proposed concepts.
Across three real-world datasets, including data from a clinical AI scribe deployment, adaptive monitors detect shifts faster than fixed monitors at the same false-alarm level.
Further, when the true shifts are sparse and interpretable, LLM-concept monitors provide the fastest and most interpretable detection, while also guiding root-cause diagnosis for complex shifts.
\end{abstract}

\section{Introduction}
\label{sec:intro}
\renewcommand{\thefootnote}{$\dagger$}%
\footnotetext{Co-senior authors.}%
\renewcommand{\thefootnote}{\arabic{footnote}}%

Preventing performance decay in deployed generative AI systems requires monitoring throughout the entire pipeline, including the input distribution, user behavior, and model outputs.
Existing monitoring procedures follow the general recipe of pre-specifying a discrepancy measure and tracking shifts with respect to this metric.
Pre-specification plays a critical role \citep{Page1954-vy, Shiryaev1963-xq, Tartakovsky2014-yq}, as it ensures control over false alarms; this is typically measured by Average Run Length under the null hypothesis of there being no changepoint ($\ARL$),
which quantifies the ``lifespan'' of a monitor until it must be reset.
For instance, LLM-based monitoring methods track scores with respect to a fixed performance rubric \citep{Husain2026-pt} or shifts in fixed directions away from the original data distribution \citep{Zhu2024-yl, Liu2024-qd, Xu2025-em}.
Other methods convert unstructured data into high-dimensional vectors, which can then be monitored for shifts with respect to pre-specified directions \citep{Cao2024-je, Chen2024-bj, Rabanser2019-vc} or kernel functions \citep{Harchaoui2009-or, Chang2018-kf}.
Monitoring metrics should be specified with immense care, as performance of monitoring procedures is highly sensitive to this choice \citep{Feng2025-rq}: strong alignment between the pre-specified measure and the future shift direction leads to fast detection, while poor alignment may lead to having little to no detection power.

Nevertheless, pre-specification has key limitations that leave it misaligned with modern AI practice.
First, it is typically difficult to pre-specify the monitoring metric, since future shift directions are, by definition, unknown.
This forces monitoring teams to choose between concentrating power on detecting shift in particular directions (sensitivity) versus spreading power across many possible directions (breadth).
Defining shifts for high-dimensional or unstructured data is particularly difficult given that such data can experience a wide variety of shifts.
As such, recent works on retrospective testing have instead sought to \textit{learn} so-called ``witness functions'' that highlight differences between data distributions by training a classifier \citep{Lopez-Paz2017-uc, Kubler2022-up, Kubler2022-df}. Second, pre-specification prevents the monitoring metric from adapting as information accumulates, whereas a monitoring team may gain a better understanding of the data and likely shift directions as data accumulates.
For instance, a shift may become obvious to a human after a small number of observations \citep{Faust2024-ug}.
In fact, the rigidity of current statistical monitoring procedures has led monitoring to be viewed as a fundamentally ``human'' activity \citep{Montgomery2013-sk}, in which statistical evidence complements, rather than replaces, human judgment.
Finally, with the increasing reasoning capabilities of LLMs, monitoring practitioners are increasingly drawn to using LLMs to assist in monitoring \citep{Almodovar2024-ap}, but formal statistical monitoring procedures still lack the ability to incorporate such adaptations on-the-fly.

As such, the goal of this paper is to answer the following question: \textit{can we monitor unstructured data streams by adaptively increasing sensitivity to shifts learned from past data, while still maintaining statistical guarantees over false alarms?}
To this end, we look to the Safe Anytime-Valid Inference (SAVI) framework \citep{ramdas2023game}, which enables nearly arbitrary adaptations during the testing process by reframing sequential testing as a game of betting against the null hypothesis.
This game must satisfy one simple martingale-like property: the game's returns should be centered so that any bettor---however adaptive and adversarial---cannot gain wealth in expectation under the null hypothesis \citep{Shafer2021-ni, Grunwald2024-qs, Ramdas2025-uv}.
A bettor's current wealth is thus the current level of evidence against the null, and rejection of the null is warranted when the bettor's wealth exceeds some threshold; this wealth is called an ``e-value,'' as the analog of the classical p-value.

In sequential changepoint detection, the null hypothesis is that there is no change, i.e., $H_0: P_t = P \quad \forall t = 1,2,\cdots$, while the alternative is that the data switches to a post-change distribution $Q$ after some changepoint $\tau$.
To control $\ARL$, recent work has built on e-values and the SAVI framework to introduce ``e-detectors'' as a broad class of procedures \citep{Shin2024-dj}.
Through the betting lens, these methods can be generally viewed as playing the following game:
\begin{tcolorbox}[
    colback=white,
    colframe=black,
    title=\texttt{Changepoint-Detection-Game},
    fonttitle=\bfseries,
    boxsep=1pt,      left=2pt,        right=2pt,       top=2pt,         bottom=2pt,      toptitle=2pt,    bottomtitle=2pt  ]
At each time $t$:
\begin{enumerate}[leftmargin=*, itemsep=0pt, topsep=2pt, partopsep=0pt, parsep=0pt]
\item Based on data up to time $t-1$, the bettor proposes a witness function $\hat{f}_t$, whose values are expected to be large post-change but small pre-change, and a bet scaling factor $\lambda_t$.
    \item The game's return at time $t$ is the scaled difference between the value of $\hat{f}_t$ on the new observation $X_t$ and its expectation in the pre-change distribution.
\end{enumerate}
\end{tcolorbox}

\textit{Playing the game} is thus quite simple: at each time point, the bettor adaptively hypothesizes when/what changed in the stream and designs a witness function to detect this.
Indeed, humans may actually be quite adept at this game, as a human who is manually monitoring a stream of text or images  will naturally hypothesize shift directions (e.g., images become more blurry) and check alignment of incoming observations against these candidate directions.

\begin{figure*}[t]
    \centering
    \vspace{-0.5cm}
    \includegraphics[width=0.85\linewidth]{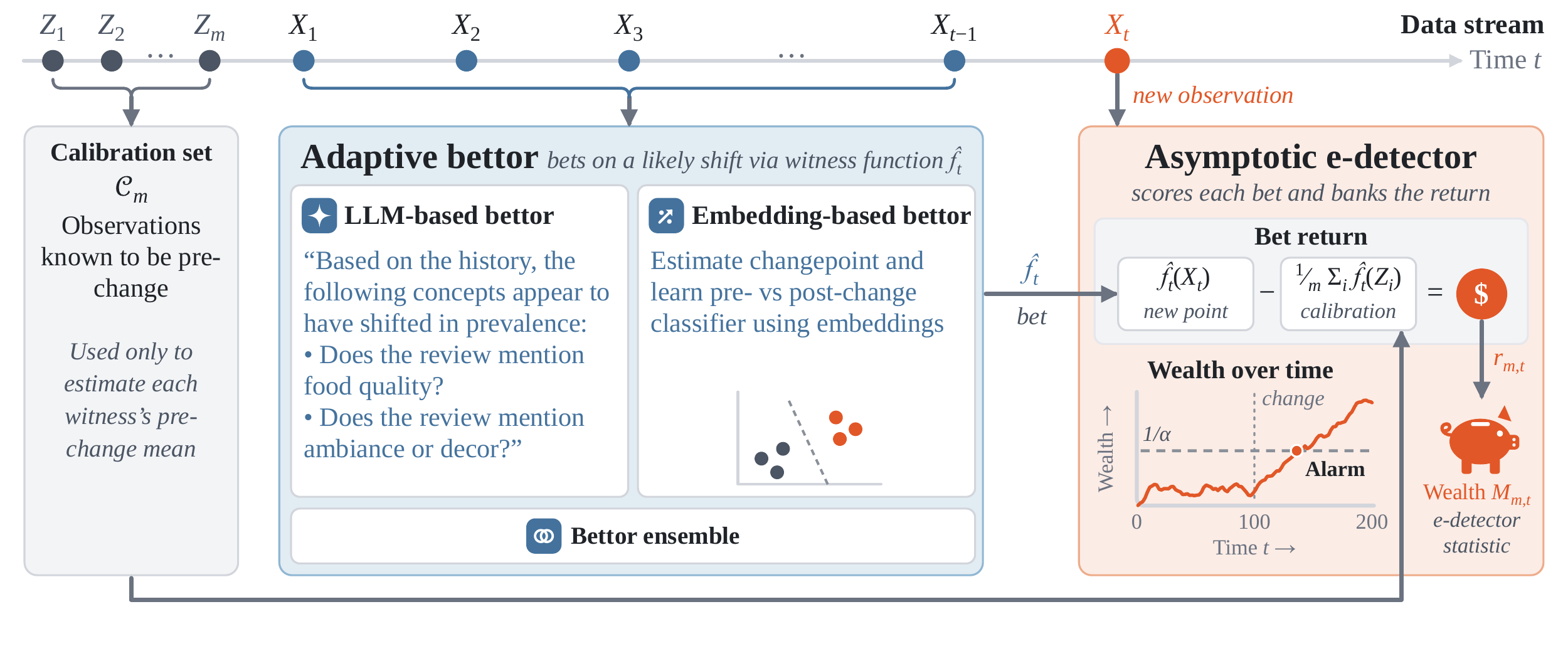}
    \vspace{-0.8cm}
    \caption{Adaptive monitoring procedure overview: At each time step, an adaptive bettor uses past observations to bet on how the data may have shifted. When the next observation arrives, the bet's return is computed by comparing it with a calibration set, and an alarm is raised once the accumulated wealth crosses a threshold.}
    \vspace{-0.5cm}
    \label{fig:overview}
\end{figure*}

On the other hand, \textit{setting up the game} is far from straightforward, because
the exact expectation of the witness function is unknown in practice; so it is difficult to center the game's returns to satisfy the martingale requirement and guarantee there is no winning strategy under the null.
To our knowledge there are currently no highly adaptive changepoint detection methods for the most basic setting of monitoring a single stream of unstructured data.
Instead, prior works have mainly explored variants of this game, such as monitoring paired data streams where the paired structure simplifies the design of game returns \citep{Shekhar2024-xr, Podkopaev2023-yh}.
However, one typically does not have access to a second parallel data stream representing the relevant pre-change distribution (e.g., an AI scribe just deployed in a hospital has, by definition, limited pre-change data).

To this end, this work introduces \textit{asymptotic e-detectors} to adaptively monitor a change in a single, potentially unstructured, data stream.
Asymptotic e-detectors play an approximation of the \texttt{Changepoint-Detection-Game}, where the pre-change distribution is approximated by a finite calibration dataset.
Following the foundational work of \citet{Chu1996-fg}, we show that this game controls the $\ARL$ asymptotically, as the size of the calibration dataset $m \rightarrow \infty$; empirically, we find that $\ARL$ is well-controlled even for calibration datasets as small as $m=30$.
Having set up the game, we then introduce various betting strategies: bettors that detect black-box shifts using embeddings, bettors that detect interpretable shifts by iteratively searching over candidate concepts using LLMs, and bettor ensembles.
Across three real-world text/image data streams, we find that adaptive monitoring procedures outperform their fixed counterparts.
When the true shift is sparse and interpretable, using LLMs to learn which concepts to monitor can not only recover the shifted concepts but also substantially accelerate detection speed.
Code for reproducing the results in this work is available at \url{https://github.com/jjfenglab/monitoring-by-betting}.

\section{Related work}
\label{sec:related}

\textbf{Monitoring LLMs and GenAI in practice.}
The most widespread practice for monitoring drifts in LLM applications involves measuring performance of the system outputs in response to system inputs, often approximated using LLM-as-a-judge \citep{Databricks2024-rb, Croxford2025-wl, Husain2026-pt, Vossler2026-ii}.
However, these approaches require the metrics to be pre-specified and thus suffer from the sensitivity--breadth trade-off, such as failing to include or poor alignment with actual drift directions \citep{zhong2023goal, Shankar2024-od, Dunlap2024-td}.
The methods developed in this paper can thus provide additional flexibility for monitoring performance or detecting more general shifts.
Some works have used e-values to test for the correctness of LLM outputs \citep{dhillon2025scores, sadhuka2025valuator, Zhou2026-hn}, but not for monitoring unstructured data for changepoints.
We note that other specialized testing approaches designed solely to test shifts in LLM behavior---such as repeatedly using the same test set \citep{Chen2024-fg} or actively constructing prompts for testing \citep{Zhu2025-kb, chauvin2026log, Chauvin2026-dd}---are outside the scope of this work.

\textbf{Changepoint detection procedures.}
There is a vast literature on CUSUM or Shiryaev--Roberts (SR) procedures for changepoint detection, from classical (generalized) likelihood ratio procedures \citep{Siegmund1995-to} to more recent classifier two-sample test (C2ST)-based ones \citep{Puchkin2023-yt, Gong2022-dz}.
However, these classic methods often simulate or analytically derive the false alarm-controlling threshold by assuming a simple, parametric structure on the pre-change distribution or requiring the pre-change distribution to be known, which is not practical for monitoring unstructured inputs and outputs of GenAI systems.
Recently, e-detectors \citep{Shin2024-dj}, which generalize classic changepoint detection procedures to composite and nonparametric settings, have become increasingly popular with the benefit that thresholding at $1/\alpha$ guarantees that $\ARL$ is at least $1/\alpha$.
E-detectors can be constructed as CUSUM and SR procedures based on e-processes, which generalize the notion of test (super)martingale. CUSUM and SR variants of conformal testing martingales (CTMs) \citep{Vovk2021-re, Prinster2025-bv} can thus also be viewed as e-detectors on a filtration of conformal p-values.
Compared with standard CTMs which conventionally assume a pre-specified conformity score, e-detectors allow all quantities entering the betting function to be adaptive provided they are predictable, including post-change estimates, betting sizes, and the witness function \citep[among others]{Howard2020-bj, Ramdas2020-wr, Shekhar2024-xr, Richter2025-bk}.
Weighted CTMs \citep{Prinster2025-bv} allow for monitoring an adaptive null hypothesis, but here we focus on adaptively learning the alternative hypothesis.
Nevertheless, existing e-detector methods are restricted to data streams with specific structures that are amenable to martingale-like requirements.
To our knowledge, there are no e-detector methods for the most common and basic task encountered by GenAI systems: monitoring a single, unstructured data stream.
To address this technical gap, this work introduces an extension of the e-detector framework that relaxes the strict martingale requirements.

\textbf{Interpretable explanations of shifts.}
Existing work on interpreting data shifts primarily focuses on providing post-hoc explanations \citep{Hinder2023-jv, Hinder2024-nm, Rabinovich-ra}, which can build trust in AI monitoring systems \citep{Lipton2018-wu, Doshi-Velez2017-yv} and facilitate the diagnosis and correction of the underlying model \citep{Caruana2015-ul}.
However, post-hoc explanations may not be faithful to the shift detected originally using black-box methods \citep{Rudin2019-hw}.
Work on white-box changepoint detection methods is extremely limited \citep{Wijaya2021-qh} and requires the set of potential shift directions to be pre-specified.
White-box anomaly/Out-of-Distribution (OOD) detection takes a similar approach, using natural language to pre-specify the likely distribution/definition of an OOD sample
\citep{Esmaeilpour2022-vj, Cao2024-je, Dai2023-ic, Zhu2024-yl}.
However, pre-specification carries a major risk of missing the actual shift, as future shifts are typically unknown in practice.
As such, this work considers learning interpretable directions along which to detect shifts, which draws on earlier works learning natural language explanations to classify between two data distributions \citep{zhong2022describing, zhong2023goal, dunlap2024describing} and searching across the infinite space of concepts to maximize the performance of a Concept Bottleneck Model  \citep{Koh2020-kl, Feng2025-qd}.

\section{The Monitoring Game}
\label{sec:framework}

Suppose we sequentially observe independent random variables $(X_1, X_2, \dots)$ with $X_t \in \mathcal{X}$ for all $t \geq 1$, whose joint distribution is denoted by $\mathbf{P}$.
Let $(\Fil_{t})_{t\geq 0} $ be the natural filtration with $\Fil_t = \sigma(X_1, \dots, X_t)$.
Nonparametric sequential changepoint detection can be formally defined as monitoring the null hypothesis of there being no changepoint and the alternative of there being a single changepoint, i.e.,
\begin{align}
    H_0&: \exists \ P, \ \text{s.t.} \ \mathbf{P} = P^{\infty}, \label{eq:nullHypothesis} \\
    H_1&: \exists \tau^* \geq 1 \text{ and } Q \ne P \text{ s.t. } \mathbf{P} = P^{\tau^*} \times Q^{\infty},
    \label{eq:AltHypothesis}
\end{align}
where $P$ and $Q$ are unknown.
For a monitoring procedure with stopping time $\tau$, we characterize its operating characteristics through its $\ARL$ and Expected Detection Delay (EDD), which are analogs of Type I and II error in classical hypothesis testing and formally defined as
\begin{align}
    \ARL &= \E[\tau], \\
    \EDD &= \mathbb{E}_{H_1}\left[\tau - \tau^* | \tau > \tau^* \right],
\end{align}
where $\E$ and $\mathbb{E}_{H_1}$ denote expectation under $P^\infty$ and $ P^{\tau^*} \times Q^{\infty}$, respectively. Drawing on the e-detector framework, the key to controlling $\ARL$ is to design a proper \texttt{Changepoint-Detection-Game} that ensures there is no winning strategy under the null.
To this end, we begin with describing e-detectors for playing an idealized form of the game and then how their extension---\textit{asymptotic} e-detectors---can play the approximate game in real-world settings.
We defer all proofs to Appendix~\ref{app:proofs}.

\subsection{E-detectors for the idealized game}

E-detectors are a general class of sequential tests for the stated changepoint detection hypotheses in the following sense:
\begin{definition}[E-detector]
    Let $M = (M_t)_{t\geq 0} $ denote a nonnegative adapted process with respect to a filtration $\mathcal{F} = (\mathcal{F}_t)_{t \geq 0}$.
The process $M$ is an e-detector for $H_0$ if
    $
        \mathbb{E}_{H_0} [M_\tau] \leq \mathbb{E}_{H_0} [\tau]
    $ for all $\tau \in \mathcal{T}$,
     where $\mathcal{T}$ is the set of all stopping times with respect to $\mathcal{F}$.
\end{definition}
From this definition, it immediately follows that for any stopping time $\tau = \inf\{t\geq1:M_t \geq 1/\alpha\}$, the $\ARL$ is guaranteed to be no smaller than $1/\alpha$.

Despite this abstract definition, e-detectors constructed from a game-theoretic setup can be defined quite concretely as the aggregate wealth of a bettor playing the \texttt{Changepoint-Detection-Game}.
In particular, suppose there is a sequence of subgames that are smaller versions of the \texttt{Changepoint-Detection-Game} and begin at each timepoint with 1 unit of wealth.
For each of these subgames, we define a corresponding e$_j$-process that represents the bettor's wealth in the subgame starting from candidate changepoint $j$, i.e.,
\begin{definition}[e$_j$-process]
\label{def:e_process}
    For any $j \geq 1$, $\Ej = (\Ej_t)_{t\geq 0} $ is an e$_j$-process for $H_0$ if it is a nonnegative adapted process satisfying $\Ej_0 = \dots = \Ej_{j-1} = 1$, and
    \begin{align*}
        \mathbb{E}_{H_0} \left[ \Ej_\tau | \mathcal{F}_{j-1} \right] \leq 1, \forall \tau \in \mathcal{T}.
    \end{align*}
\end{definition}
In words, this definition states that each e$_j$-process represents a subgame where there is no way that the bettor can expect to accumulate wealth under the null.
One can then show that valid e-detectors can be constructed through various aggregations of wealth across subgames: based on the classical Shiryaev--Roberts (SR) procedure \citep{Shiryaev1963-xq, Roberts1966-eh}, the SR e-detector is defined as the total wealth across all subgames.
Based on \citet{Page1954-vy}, the CUSUM e-detector is defined as the maximum wealth across all these subgames.

To obtain valid e-processes, the key question is then how to design this betting game.
For each unit of wealth, an aggressive bettor could bet all their wealth in hope of a large return rate $r_t$, but this strategy runs the risk of bankruptcy.
In practice, we require each bettor provide a (positive and predictable) betting fraction $\lambda_t$ that is small enough to avoid betting \textit{all} their wealth; that is, to bound the return rate above zero, $\ 1+\lambda_t r_t > 0$, for all $t$ (details in Section \ref{sec:asymptotic_e_detector_approx_game}).
Next, suppose for simplicity that the rate of return $r_t$ is the same across all active subgames.
To ensure there is no winning under the null, we require the expected return rate not to exceed zero under the null, i.e.,
\begin{assumption}
    The return process $r = (r_t)_{t \ge 1}$ adapted to filtration $\mathcal{F}$ satisfies $\E\left[r_t | \mathcal{F}_{t-1}\right] \le 0$ for all $t$.
    \label{ass:martingale}
\end{assumption}
Consequently, the $j$-th e-process  is the cumulative product $E_t^{(j)} := \prod_{s=j}^t (1 + \lambda_s r_s)$, which is a supermartingale under $H_0$.
The SR e-detector $(M_t^{\SR} )_{t\geq0} = (\sum_{j=1}^t \Ej_t)_{t\geq0} $ can be calculated with the recursion
\begin{align}
    M_t^{\SR}  = (1 + M_{t-1}^{\SR}) (1 + \lambda_t r_t) \text{ with } M_0^{\SR}=0;
\end{align}
the CUSUM e-detector can be derived using a similar recursion.

We can now set up the \texttt{Changepoint-Detection-Game} as follows.
Let $\hat{f}_t : \mathcal{X} \to \mathbb{R}$ be a $\Fil_{t-1}$-measurable witness function selected by the bettor, which can be determined however adaptively using any or all of the past data.
In an ideal world where the true pre-change distribution $P$ is known, it is natural to define the game's return as the centered quantity
\begin{align}
    r_t = \hat{f}_t(X_t) - \E[\hat{f}_t(X_t) \mid \mathcal{F}_{t-1}]
    \label{eq:return_ideal}
\end{align}
to satisfy Assumption~\ref{ass:martingale}.
Then under this ideal game design, the statistics $M^{\SR}$ and $M^{\CU}$ are valid e-detectors.
However, without access to the true pre-change distribution, we cannot obtain the exact value of $\E[\hat{f}_t(X_t) \mid \mathcal{F}_{t-1}]$ and thus cannot form a valid e-detector.
The question is then how we can still design monitoring processes that control the $\ARL$.

\subsection{Asymptotic e-detectors for the approximate game}
\label{sec:asymptotic_e_detector_approx_game}

As the idealized \texttt{Changepoint-Detection-Game} is not feasible in practice, an alternate approach is needed.
To this end, we draw on the asymptotic changepoint detection framework introduced in \citep{Chu1996-fg}, which introduced the notion of a calibration dataset and a new asymptotic framework defined with respect to the size of the calibration dataset; this framework foundationally shifted the changepoint detection field at the time, as it drastically simplified approaches to obtaining (asymptotic) control of the $\ARL$.
Inspired by this work, we introduce a similar asymptotic framework.

Suppose we have access to a calibration set $\mathcal{C}_m = \{Z_1, \dots, Z_m\}$ of size $m$ sampled IID under the pre-change distribution $P$. Intuitively, we could estimate $\E[\hat{f}_t(X)\mid \mathcal{F}_{t-1}]$ with the empirical mean over $\mathcal{C}_m$ and thus approximate the idealized return $r_t$ with
\begin{align}
r_{m,t} &= \hat{f}_t(X_t) - \frac{1}{m}\sum_{i=1}^{m} \hat{f}_t(Z_i).
\label{eq:return_approx}
\end{align}
Plugging $r_{m,t}$ in place of $r_t$ of an SR e-detector, we define
\begin{align}
    M_{m,t} &= (1 + M_{m,t-1}) (1 + \lambda_t r_{m,t}) \label{eq:M_mt}
\end{align}
for $t\geq 1$ and with $M_{m,0} = 0$.
Intuitively, as $m \rightarrow\infty$, we expect the conditional expectation of $r_{m,t}$ to converge to zero, and thus $M_{m,t}$ to behave like an e-detector, under one critical requirement: the witness function  must be fully independent of the calibration dataset, so that the empirical average in \eqref{eq:return_approx} is a uniformly consistent estimate of the true expectation under the null.

Motivated by this, we introduce the notion of an \emph{asymptotic e-detector} as follows.
\begin{definition}[Asymptotic e-detector]
\label{def:asymptotic-edetector}
For each calibration size $m$, let $\mathbf{M}_m = (M_{m,t})_{t \geq 1}$ be a nonnegative process adapted to $\mathcal{G}_m = (\mathcal{G}_{m, t})_{t\geq 0} $ with $\mathcal{G}_{m, t} = \sigma(\mathcal{F}_t, \mathcal{C}_m)$. Then $\mathbf{M}_m$ is an \emph{asymptotic e-detector} for $H_0$ if, for the stopping time $\tau_m = \inf\{t : M_{m,t} \geq 1/\alpha\}$,
\begin{equation}
\limsup_{m \to \infty} \left( \mathbb{E}_{H_0}[M_{m,\tau_m}] - \mathbb{E}_{H_0}[\tau_m] \right) \leq 0.
\label{eq:asymptotic-edetector}
\end{equation}
\end{definition}

Let us further suppose the following regularity conditions are satisfied:
\begin{assumption}
\label{ass:regularity}
(i) For all $t \geq 1$, witness functions $\hat{f}_t$ are measurable w.r.t. $\mathcal{F}_{t-1}$ and take values in $[-B,B]$.
(ii) All witness functions $\{\hat{f}_t\}_t$ lie in a fixed $P$-Glivenko-Cantelli class $\mathbb{F}$.
(iii) The calibration set $\mathcal{C}_m = \{Z_1, \dots, Z_m\} \overset{\text{iid}}{\sim} P$ and independent of the monitoring stream $X_1, X_2, \dots$.
(iv) The stopping times $\{\tau_m\}_m$ under $H_0$ are uniformly integrable.
\end{assumption}

Condition~(i) states that the witness function cannot depend on the calibration dataset or any downstream variable of the calibration data (e.g., the game's return $r_t$) and the boundedness is satisfied by standard classifier classes on fixed representations.
Condition~(ii) is a standard complexity assumption on the witness function class.
A calibration dataset that satisfies Condition~(iii) is often easy to collect in practice.
Condition~(iv) is a standard regularity assumption; if instead the expected stopping time is unbounded, the $\ARL$ trivially exceeds the nominal level.

Given the assumptions above, the following theorem formally states that an asymptotic e-detector that reuses the same calibration set of size $m$ over time remains valid asymptotically as $m\rightarrow \infty$.
\begin{theorem}
\label{thm:main}
Consider the detector $(M_{m,t})_{t \geq 1}$ defined by~\eqref{eq:M_mt} with $\lambda_t \leq 1/(2B)$.
Define the stopping time $\tau_m = \inf\{t\geq1:M_{m,t} \geq 1/\alpha\}$.
Under Assumption \ref{ass:regularity}, $(M_{m,t})_{t \geq 1}$ is an asymptotic e-detector.
Consequently, for any $\alpha \in (0,1)$, $\liminf_{m \to \infty} \E[\tau_m] \geq 1/\alpha$, i.e., the $\ARL$ is asymptotically controlled at level $\alpha$.
\end{theorem}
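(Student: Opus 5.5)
Our plan is to compare the approximate detector $M_{m,t}$ with the idealized SR e-detector built from the centered returns $r_t$ in \eqref{eq:return_ideal}, using the same predictable $\hat f_t$ and $\lambda_t$. Write
\[
r_{m,t} = r_t + \delta_{m,t},\qquad \delta_{m,t} := \E[\hat f_t(X_t)\mid\Fil_{t-1}] - \tfrac1m\textstyle\sum_{i=1}^m \hat f_t(Z_i).
\]
Under $H_0$ the $X_t$ are IID and independent of $\Fil_{t-1}$, so the first term is $P\hat f_t$. Hence $|\delta_{m,t}|\le \Delta_m := \sup_{f\in\mathbb F}|P f - \Pm f|$, where $\Pm$ is the empirical measure of $\mathcal C_m$. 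By Assumption~\ref{ass:regularity}(ii)--(iii), $\Delta_m\to0$ almost surely. Since $\Delta_m \le 2B$, bounded convergence also gives $\mathbb E[\Delta_m]\to 0$. Note that $\Delta_m$ is $\mathcal G_{m,0}$-measurable and does not depend on $t$. This uniformity is why Condition~(ii) is needed, since $\hat f_t$ is chosen adaptively.

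The key step is a one-step drift bound in the enlarged filtration $\mathcal G_m$. Condition~(iii) makes $X_t$ independent of $\mathcal G_{m,t-1}=\sigma(\Fil_{t-1},\mathcal C_m)$, and $\hat f_t$ and $\lambda_t$ are $\Fil_{t-1}$-measurable by Condition~(i). Therefore
\[
\E[r_{m,t}\mid\mathcal G_{m,t-1}] = \delta_{m,t}, \qquad |\delta_{m,t}|\le\Delta_m.
\]
The constraint $\lambda_t\le 1/(2B)$ together with $|r_{m,t}|\le 2B$ keeps $1+\lambda_t r_{m,t}\ge 0$, so $M_{m,t}\ge 0$ is adapted and well defined. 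From \eqref{eq:M_mt},
\[
\E[M_{m,t}\mid\mathcal G_{m,t-1}] = (1+M_{m,t-1})(1+\lambda_t\delta_{m,t}) \le (1+M_{m,t-1})(1+c\Delta_m),
\]
with $c=1/(2B)$. Consequently $D_t := M_{m,t}-\sum_{s=1}^t (1+M_{m,s-1})(1+\lambda_s\delta_{m,s})+\sum_{s=1}^t M_{m,s-1}$ is a $\mathcal G_m$-martingale. Equivalently, $M_{m,t} - t - \sum_{s\le t}\lambda_s\delta_{m,s}(1+M_{m,s-1})$ is a martingale with zero mean.

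We then apply optional stopping at $\tau_m$, first at $\tau_m\wedge n$ and then letting $n\to\infty$. This yields
\[
\mathbb E[M_{m,\tau_m}] - \mathbb E[\tau_m] \le c\,\mathbb E\Big[\Delta_m\sum_{s=1}^{\tau_m}(1+M_{m,s-1})\Big] \le c(1+1/\alpha)\,\mathbb E[\Delta_m\tau_m].
\]
Two facts justify this. First, $M_{m,s-1}<1/\alpha$ for $s\le\tau_m$. Second, $M_{m,\tau_m}\le (1+1/\alpha)(1+2c B)$ is bounded, so the passage to the limit works by bounded and monotone convergence, provided $\mathbb E\tau_m<\infty$. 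If $\mathbb E\tau_m=\infty$, the ARL claim is trivial.

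The main obstacle is showing $\mathbb E[\Delta_m\tau_m]\to0$, since $\Delta_m$ and $\tau_m$ are dependent through the calibration set. Here we use Condition~(iv). For any $K$,
\[
\mathbb E[\Delta_m\tau_m]\le K\,\mathbb E[\Delta_m] + 2B\,\sup_{m}\mathbb E[\tau_m\mathbbm 1\{\tau_m>K\}].
\]
The first term vanishes as $m\to\infty$, and the second is small for large $K$ by uniform integrability. This establishes \eqref{eq:asymptotic-edetector}.

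For the consequence, note that $M_{m,\tau_m}\ge1/\alpha$ on $\{\tau_m<\infty\}$, and $\tau_m<\infty$ almost surely because $\mathbb E\tau_m<\infty$. Hence $\mathbb E[\tau_m]\ge 1/\alpha - o(1)$, which gives $\liminf_m\E[\tau_m]\ge1/\alpha$.
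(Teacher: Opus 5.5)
Your proposal is correct and follows essentially the same route as the paper. The paper also splits the increment into $1$, a drift $\lambda_t\delta_{m,t}(1+M_{m,t-1})$ with $\delta_{m,t}=(P-\hat{P}_m)\hat f_t$, and a $\sigma(\mathcal{F}_t,\mathcal{C}_m)$-martingale difference removed by optional stopping, then bounds the drift by a constant times $\tau_m\sup_t|(P-\hat{P}_m)\hat f_t|$; your truncation bound $\mathbb{E}[\Delta_m\tau_m]\le K\,\mathbb{E}[\Delta_m]+2B\sup_m\mathbb{E}[\tau_m 1\{\tau_m>K\}]$ is just an explicit (and somewhat cleaner) version of the paper's Vitali/uniform-integrability step.
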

Finally, we note that while Theorem~\ref{thm:main} only characterizes the asymptotic $\ARL$ and not the convergence rate itself, we intuitively expect $M_{m,t}$ to converge quite quickly to the idealized e-detector $M_t$, as martingale increments are defined in terms of conditional expectations and expectations generally converge quickly (in contrast to convergence of conditional probability distributions).

\section{Betting strategies}
\label{sec:bettor}

Having set up the game, the question is what betting strategy one should employ, i.e., how should one select the witness function in Step 1 of \texttt{Changepoint-Detection-Game}?
To this end, we extend ideas from Classifier-based Two-Sample Testing (C2ST) \citep{Lopez-Paz2017-uc, Shekhar2024-xr}, which tests for differences between two data distributions by training a classifier to distinguish between them.
Compared to classical two-sample testing methods that use pre-specified distance measures, this classification framing is particularly advantageous for unstructured data, as pre-specifying distance for such data modalities is difficult but AI/ML models are effective at learning discriminators for this space.

Consequently, we introduce Classifier-based Betting, in which the bettor adaptively selects/learns classifiers to distinguish pre- and post-change data.
Each betting step can be broken down into four decisions:
\begin{tcolorbox}[
    colback=white,
    colframe=black,
    title=Classifier-based Betting,
    fonttitle=\bfseries,
    boxsep=1pt,      left=2pt,        right=2pt,       top=2pt,         bottom=2pt,      toptitle=2pt,    bottomtitle=2pt  ]
\begin{enumerate}[label=\alph*),leftmargin=*, itemsep=0pt, topsep=2pt, partopsep=0pt, parsep=0pt]
    \item Select data representation $\phi_t: \mathcal{X} \mapsto \mathbb{R}^K$.
    \item Estimate changepoint $\hat{\tau}_t$.
\item Fit a classifier $\hat{p}_t$ between the pre- vs post-change groups with features $\phi_t(X_i)$ and pseudo-labels $Y_i = \mathbbm{1}\{i \ge \hat{\tau}_t\}$. Define witness $\hat{f}_t = \hat{p}_t \circ \phi_t$.
    \item Select scaling factor $\lambda_t$.
\end{enumerate}
\end{tcolorbox}

This work focuses on steps (a)-(c); for simplicity, this work fixes $\lambda_t$ to be constant and we refer readers to other papers that study adaptive selection of $\lambda_t$ \citep{Shekhar2024-xr}.
We present two general betting approaches that fork at the first decision point of step (a): using embedding representations versus interpretable extractions of the data.
Natural choices for steps (b) and (c) emerge after this first decision point.
Finally, since step (a) is arguably the biggest factor influencing detection speed, ensembling can be used to hedge against the uncertainty of this choice (note that ensembling preserves validity of the (asymptotic) e-detector); we evaluate ensembling later in Section~\ref{sec:experiments}.

\subsection{Bettor for black-box shifts}
We first consider the class of betting strategies that detects shifts in embedding space, which is useful when one has minimal prior knowledge of the likely shift type and/or is interested in detecting \textit{any} shift rather than a restricted set of shifts.
As such, let $\phi_{emb}: \mathcal{X} \mapsto \mathbb{R}^p$ denote the embedding model selected in step (a).
For steps (b) and (c), one can then draw on the large family of general-purpose changepoint learning procedures \citep{Gretton2012-ws, Bonneel2015-ib} and classification procedures.
A simple example---which we use in later experiments---is to (b) find the maximizer $\hat{\tau}_{t}$ of the classical changepoint test statistic for multivariate mean shifts \citep{Basseville1993-ka, Bai2010-pd}, i.e., \begin{equation}
\small
\max_{\tau} \frac{(\tau-1)(t-\tau+1)}{t}  \left\|\overline{\phi_{emb}(X)}_{1:(\tau-1)} - \overline{\phi_{emb}(X)}_{\tau:t}\right\|^2
\label{eqn:cp_meanShift}
\end{equation}
where $\overline{\phi_{emb}(X)}_{a:b}$ represents the mean extraction over positions $a$ to $b$ and $\|\cdot \|$ the $L_2$ norm, and then (c) fit a logistic regression model for the learned split.

\subsection{Bettor for interpretable shifts}

\begin{figure}
    \centering
    \includegraphics[width=\linewidth]{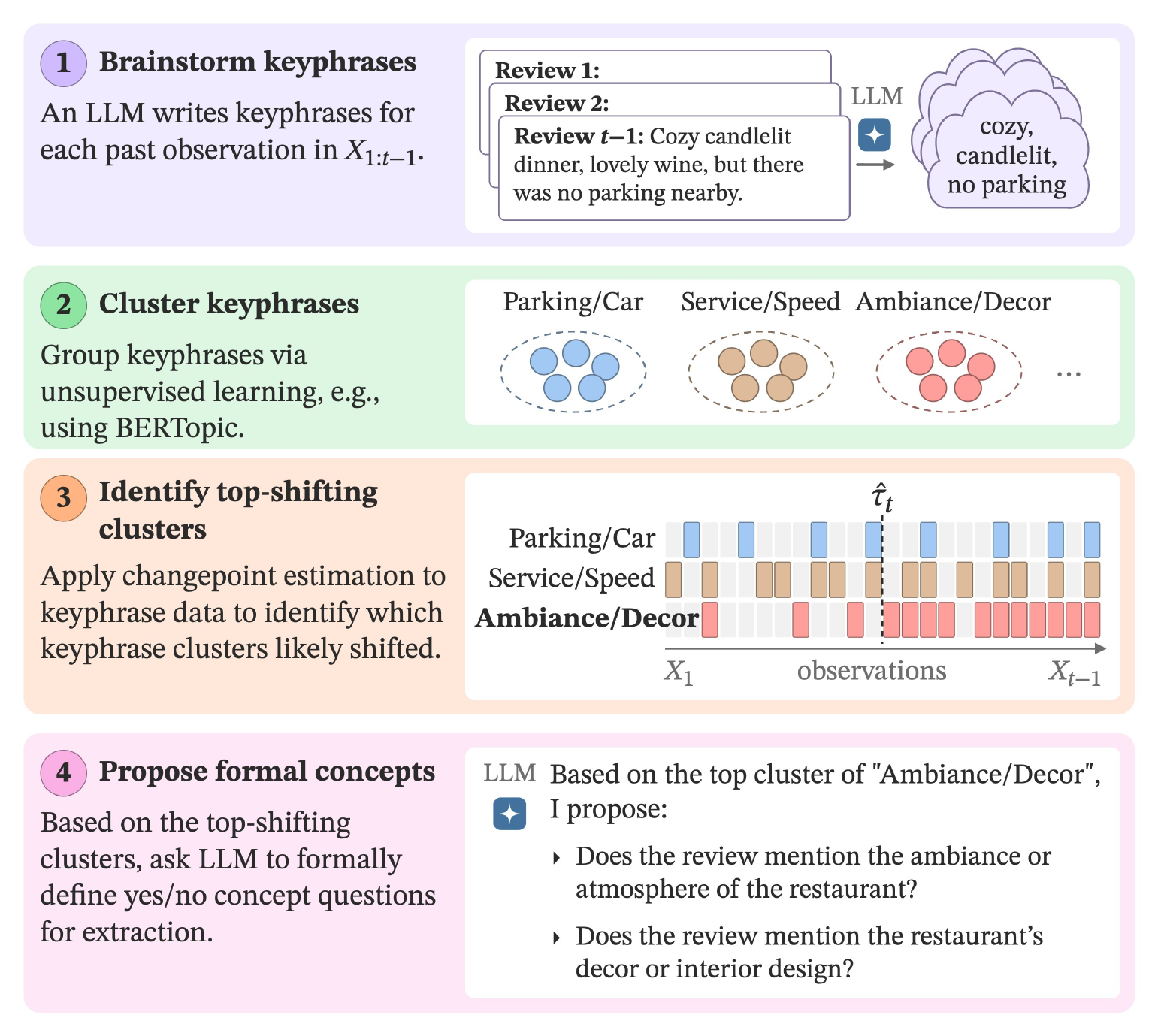}
    \vspace{-1cm}
    \caption{\llmconcept{} bettor for estimating when and what shifted, using LLM reasoning, unsupervised learning, and statistical changepoint estimation.
    }
    \vspace{-0.3cm}
    \label{fig:concept_proposer}
\end{figure}

Next consider a bettor who relies on interpretable bets, which is useful when they have domain knowledge but difficulty pre-specifying the exact set of possible shifts upfront.
Restricting to interpretable bets has two benefits: (i) constraining the set of shifts to those of practical interest (e.g., interested in shifts in clinical note quality but not writing style) and thereby reducing false alarms and (ii) injecting prior knowledge to accelerate detection, compared to black-box bets.

Formally, the interpretable bettor must supply a concept-extractor function $\phi_{\texttt{C}_t}: \mathcal{X} \mapsto \mathbb{R}^K$ in step (a), where $\texttt{C}_t$ is an adaptively selected set of concepts.
While concept extraction is generally straightforward (e.g., using an LLM \citep{Kojima2022-eo, Agrawal2022-me}), there is limited work on selecting concepts $\texttt{C}_t$ to optimize changepoint detection, as detailed in Section~\ref{sec:related}.
Figure~\ref{fig:concept_proposer} outlines an \llmconcept{} bettor that extends concept selection in \citep{Feng2025-qd} to the changepoint setting, in which the AI agent uses statistical tools to identify candidate changepoints and concepts; we found this pipeline to work well in practice and leave further refinement of this pipeline to future work.
More specifically, we have an LLM agent first brainstorm potential shifts by generating a ``word cloud'' of keyphrases describing each observation up to the current time point, cluster keyphrases using BERTopic \citep{Grootendorst2022-jl}, estimate a working changepoint and top-changing clusters based on keyphrases \citep{Enikeeva2019-yu}, and have an LLM propose formal yes/no concepts based on the top clusters  \citep{Feng2025-qd}.
More details are in the Appendix.

For steps (b) and (c), one can use any changepoint detection and classification methods but we suggest leveraging the unique advantage of the interpretable bettor: because concept directions are generally disentangled (as opposed to dense embeddings), this bettor can uniquely detect sparse shifts.
To this end, we pair interpretable concept bettors with changepoint detection methods for sparse alternatives \citep{Enikeeva2019-yu}, which jointly learn when and which subset of features shifted (up to a maximum number of $K$ features) by finding the maximizer $\hat{\tau}^{(sparse)}_{t}$ of the statistic 
\begin{equation}
\hspace{-1cm}
\footnotesize
\begin{aligned}
    \max_{\tau, m \in \mathcal{S}_K}\frac{(\tau-1)(t-\tau+1)}{t} \left\|\overline{\phi_{\texttt{C}_t}(X)_{m,1:(\tau-1)}} - \overline{\phi_{\texttt{C}_t}(X)_{m,\tau:t}}\right\|^2 - |m|
\end{aligned}
\label{eqn:cp_sparse}
\end{equation}
where $\mathcal{S}_K$ represents all possible subsets of $\{1,\cdots,p\}$ of sizes $1$ to $K$, and $\overline{\phi_{\texttt{C}_t}(X)_{m,a:b}}$ represents the mean of features restricted to subset $m$ over positions $a$ to $b$.
Likewise, we can use an elastic net penalty when fitting the logistic regression model.

\section{Experiments}
\label{sec:experiments}

We evaluate the adaptive monitoring-by-betting framework across three experimental settings: a text stream with an injected topical shift (Section~\ref{sec:exp_cebab}), an image stream from bird photographs with an injected image-attribute shift (Section~\ref{sec:exp_cub}), and real-world clinical AI scribe data (Section~\ref{sec:exp_notes}).
We implement three adaptive bettors that differ in their representation $\phi_t$, namely raw embeddings (\embeddingarm), sparse autoencoder features of the embeddings (\saearm), and concepts proposed by \texttt{gpt-4.1-mini} (\llmconcept), as well as an \ensemblearm{} that averages the \embeddingarm{} and \llmconcept{} detectors.
We report the truncated $\EDD$, the mean number of post-change observations until an alarm or the end of the stream, as a function of the nominal $\ARL$.

For fixed monitors, we run SR on CTMs \citep{Vovk2021-re} with fixed (non)conformity scores that trade breadth for sensitivity.\footnote{We omit asymptotic e-detectors with a fixed witness, as they are generally more conservative than SR for CTMs with fixed scores, as CTMs provide tight $\ARL$ control.}
\centroidarm{} scores the distance to the centroid of the calibration set, while \informedshift{} and \unrelatedshift{} score the cosine similarity with a text-defined direction for the true shift or an unrelated one.
As a performance ceiling, \oraclearm{} uses the difference between the mean post- and pre-change embeddings as its direction.

All experiments are run for 30 replicates for all LLM-based detectors for cost purposes and $100$ replicates for all others.
All methods use a calibration set of $m = 30$ based on results in Appendix~\ref{app:vary_m}, which verify $\ARL$ control across a large range of $m$ values.
See Appendix~\ref{app:experiments} for full experiment details.

\subsection{Monitoring restaurant reviews in CEBaB}
\label{sec:exp_cebab}

We first evaluate how quickly adaptive monitoring procedures can detect a topical shift in text.
Using the real-world CEBaB restaurant review dataset \citep{Abraham2022-om}, we construct disjoint pre- and post-change pools in which reviews mentioning ambiance are strongly upweighted post-change, so that the rate of ambiance mentions increases from $0.52$ to $0.88$.
The \informedshift{} and \unrelatedshift{} directions use texts about ambiance and about parking (Table~\ref{tab:ctm_exemplars}).

All procedures with null streams are well calibrated, with simultaneous $95\%$ bands for their empirical $\ARL$ that contain or lie above the desired level (Appendix Fig~\ref{fig:cebab_calibration}).
Compared to fixed monitors, adaptive monitoring speeds up detection (Fig~\ref{fig:results}a), with \llmconcept{} achieving the lowest EDD.
\llmconcept{} quickly learns the correct concept, defined through yes/no questions like ``Does the review mention ambiance or atmosphere of the restaurant?'' (Fig~\ref{fig:results}e); this illustrates how it increases sensitivity to the true shift and explains why \llmconcept{} is only slightly slower than the \oraclearm{} detector.
Meanwhile, the adaptive \ensemblearm{} achieves the second fastest EDD, by hedging across both the interpretable and black-box shifts.
Finally, the slow speed of \informedshift{} is perhaps surprising since it already knows the true concept shift.
However, it shows that drafting text to represent candidate drift directions in embedding space can be quite difficult in practice.

\vspace{-0.4cm}
\subsection{Monitoring bird images from CUB}
\label{sec:exp_cub}

\begin{figure*}[t]
\vspace{-0.5cm}
  \centering
  \includegraphics[width=0.8\textwidth]{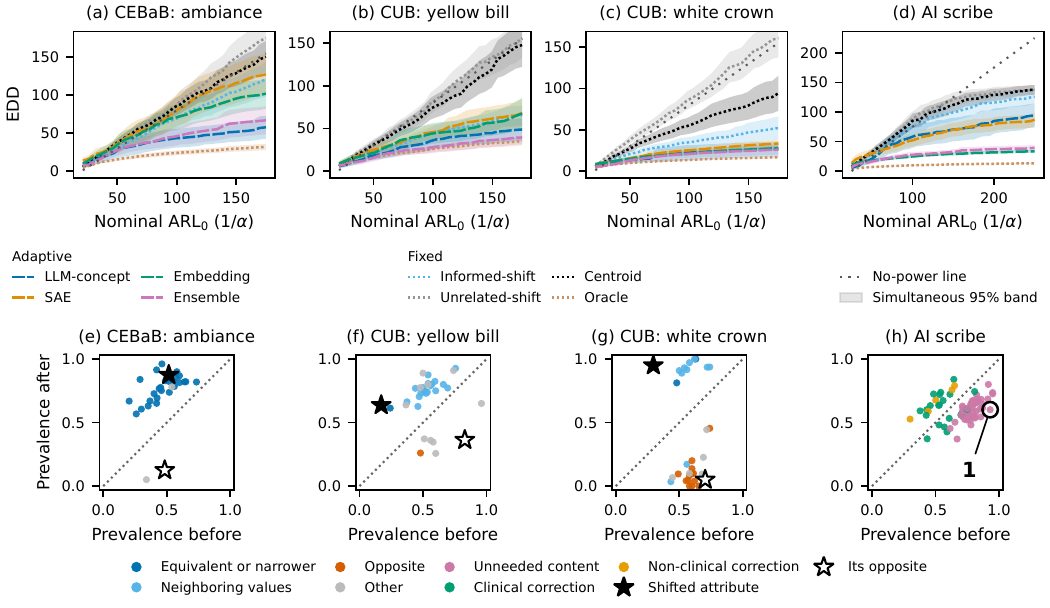}
  \vspace{-0.6cm}
  \caption{Experimental results.
  Top: EDD against nominal $\ARL$, where lower is better.
  Bottom: Concepts learned by the \llmconcept{} bettor, plotted by prevalence before vs after the changepoint.
  Concepts in (e-g) are colored by their agreement with the true shifted attribute (stars).
  Concepts in (h) are colored by themes (Table~\ref{tab:scribe_themes}), as true shifts are unknown for real-world AI deployments.
  The circled concept in (h) asks whether the draft includes unneeded non-clinical information, such as a review of systems or introductory sentences.
  }
  \label{fig:results}
\end{figure*}

Next we evaluate the proposed method on detecting shifts in images.
Using bird photographs from CUB-200-2011 \citep{Wah2011-kc}, we construct a set of streams that upsamples birds with yellow bills post-changepoint (prevalence of $0.17$ to $0.64$) and an analogous set but for white crowns (prevalence of $0.30$ to $0.95$).
We use CLIP embeddings to define the pre-specified directions for the fixed monitors; for instance, the \informedshift{} direction for streams with a shift in yellow bills uses the exemplar text ``a photo of a bird'' $\rightarrow$ ``a photo of a bird with a yellow bill.''

Across both shift settings, the adaptive \ensemblearm{} and \llmconcept{} detectors are the first and second fastest, respectively.
For the yellow bill shift, the next fastest methods are adaptive \embeddingarm{} and \saearm{} monitoring and the \informedshift{} fixed monitor. For the white crown shift, all the adaptive monitors achieve similar EDD and substantially outperform the fixed monitors; this suggests this shift was easy to learn across all data representations.

The \llmconcept{} bettor learned the correct concepts across most streams (Fig~\ref{fig:results}f, g).
For the yellow bill shift, the LLM learned concepts that exactly coincided, like ``Does the image show a bird with yellow beak?'', or overlapping concepts, like ``Does the image show a bird with yellow or yellow-orange bill?''
Interestingly, for the white crown shift, the LLM learned the correct concept or its opposite, i.e., ``Does the image show a bird with a dark head or dark crown?''

\subsection{Monitoring AI scribe data}
\label{sec:exp_notes}
\vspace{-0.2cm}

Working with the AI operations team at UCSF, we monitored shifts in real-world drafts of clinical notes generated by AI scribes and corresponding edits by the healthcare provider.
Such shifts can result from changes in the scribe (e.g., model version, prompt) or clinician behavior (e.g., automation bias). Monitoring is needed, as product updates are often unannounced and provide limited insight into how drafts and editing behaviors change.
Here we consider a known model update within the Rheumatology department.
Streams were constructed by drawing on notes $60$ days before and after the update.

Again, we find that all adaptive monitoring procedures were  substantially faster than the fixed monitors (Fig~\ref{fig:results}d).
The ranking between the adaptive monitors differs here: \ensemblearm{} and \embeddingarm{} bettors are now the top-ranking, outperforming \texttt{SAE} and \llmconcept{} alone.
This change is not entirely unexpected, based on the concepts learned
by the \llmconcept{} bettor (Table~\ref{tab:scribe_examples}): 
the model update changed the overall structure of the AI draft and made it more concise, so edits that ``restructure the note'' and ``remove unnecessary, redundant, or irrelevant content'' became less frequent.
Such major changes are more quickly detectable using embeddings, whereas \llmconcept{} is faster at detecting sparse shifts.
Explanations such as these show the utility of the \llmconcept{} bettor.
Indeed, the AI operations team found the interpretable bettor more actionable, as it points to potential directions for future improvements and unanticipated effects of model updates.
These findings also aligned with rheumatologist feedback, as many reported the update improved note structure, reduced unnecessary information, and increased precision of medication information.

\section{Discussion}
\label{sec:discussion}
\vspace{-0.2cm}
One of the most common and basic tasks when deploying a GenAI system is monitoring unstructured data streams.
Whereas existing approaches have relied on detecting shifts in pre-specified directions, this work studies how to adaptively learn what to monitor, given the vast breadth of potential shifts for unstructured data streams.
Drawing on testing-by-betting, this work introduces asymptotic e-detectors, which adaptively bet the shift direction at each time step while providing asymptotic false alarm control.
Across real-world data streams, adaptive bettors are found to detect shifts faster than fixed monitors:
\llmconcept{} bettors were fastest for sparse interpretable shifts, black-box bettors for broad shifts, and their ensemble achieved (near-)top performance across all settings.
Furthermore, by incorporating LLM reasoning, \llmconcept{} bettors can identify shifts of practical relevance, which often accurately coincided with the true shift in experiments.
Finally, while this work shows how reasoning can be incorporated into automated monitoring procedures, it is important to acknowledge that LLMs may identify shifts that are correlated with but are not the actual root cause. Thus expert interpretation of monitoring results is ultimately needed for fully comprehensive monitoring and diagnosis.

\subsection*{AI use statement}

In this work, we used generative AI tools to help with: cleaning and reformatting datasets, providing critical ingredients for proving mathematical claims, assisting in the writing of proofs, designing or providing feedback on research methodology or experiments, and implementing methods.
We have not used generative AI tools to help with: generating synthetic data sets; helping develop theoretical models or conceptual frameworks; formulating mathematical claims; proposing or refining hypotheses; assisting with translation; supporting qualitative and thematic data analysis; and interpreting results.
Additionally, we used generative AI tools to help with: creating figures for illustrating the proposed methods; editing software code; and summarizing existing literature.
All AI-assisted work is carefully reviewed and edited by the authors.
All LLM-generated code was verified and tested for correctness by authors.
All LLM-assisted proofs were checked line by line by the authors.
We take responsibility for the final content of this work, including text, claims, or artifacts produced with the aid of generative AI.

\subsection*{Acknowledgements}

The authors thank Nicholas Petrick, Berkman Sahiner, and Gene Pennello (U.S. Food and Drug Administration, Center for Devices and Radiological Health) for helpful discussions and serving as stakeholders for this project.

\subsection*{Disclaimers}
The contents represent the views of the author(s) and do not represent the views of, and are not an endorsement by, the U.S. Food and Drug Administration (FDA)/Department of Health and Human Services (HHS) or the U.S. Government. The views, findings, and interpretations contained in this document do not constitute FDA guidance, position on this matter, or legally enforceable requirements.

\bibliographystyle{plainnat}
\bibliography{paperpile_llm_changepoint,extra_refs}

\newpage
\section*{Checklist}

\begin{enumerate}

  \item For all models and algorithms presented, check if you include:
  \begin{enumerate}
    \item A clear description of the mathematical setting, assumptions, algorithm, and/or model. [Yes, see Section \ref{sec:framework} and \ref{sec:bettor}]
    \item An analysis of the properties and complexity (time, space, sample size) of any algorithm. [Yes, see Section \ref{sec:framework}]
\item (Optional) Anonymized source code, with specification of all dependencies, including external libraries. [Yes, \url{https://github.com/jjfenglab/monitoring-by-betting}]
  \end{enumerate}

  \item For any theoretical claim, check if you include:
  \begin{enumerate}
    \item Statements of the full set of assumptions of all theoretical results. [Yes]
    \item Complete proofs of all theoretical results. [Yes, see Appendix \ref{app:proofs}]
    \item Clear explanations of any assumptions. [Yes, see Section \ref{sec:framework}]
  \end{enumerate}

  \item For all figures and tables that present empirical results, check if you include:
  \begin{enumerate}
\item The code, data, and instructions needed to reproduce the main experimental results (either in the supplemental material or as a URL). [Yes, \url{https://github.com/jjfenglab/monitoring-by-betting}]
    \item All the training details (e.g., data splits, hyperparameters, how they were chosen). [Yes, see Appendix  \ref{app:implementation}]
    \item A clear definition of the specific measure or statistics and error bars (e.g., with respect to the random seed after running experiments multiple times). [Yes, see Appendix~\ref{app:extra_results} and figures]
    \item A description of the computing infrastructure used. (e.g., type of GPUs, internal cluster, or cloud provider). [Yes, see Appendix~\ref{app:extra_results}]
  \end{enumerate}

  \item If you are using existing assets (e.g., code, data, models) or curating/releasing new assets, check if you include:
  \begin{enumerate}
    \item Citations of the creator If your work uses existing assets. [Yes, see References]
    \item The license information of the assets, if applicable. [Yes, see Appendix~\ref{app:experiments}]
    \item New assets either in the supplemental material or as a URL, if applicable. [Not Applicable]
    \item Information about consent from data providers/curators. [Not Applicable]
    \item Discussion of sensible content if applicable, e.g., personally identifiable information or offensive content. [Yes, see IRB info in Appendix~\ref{app:experiments}]
  \end{enumerate}

  \item If you used crowdsourcing or conducted research with human subjects, check if you include:
  \begin{enumerate}
    \item The full text of instructions given to participants and screenshots. [Not Applicable]
    \item Descriptions of potential participant risks, with links to Institutional Review Board (IRB) approvals if applicable. [Yes, see IRB info in Appendix~\ref{app:experiments}]
    \item The estimated hourly wage paid to participants and the total amount spent on participant compensation. [Not Applicable]
  \end{enumerate}

\end{enumerate}

\onecolumn
\aistatstitle{Supplementary Materials}

\appendix
\renewcommand{\thefigure}{\thesection.\arabic{figure}}
\renewcommand{\thetable}{\thesection.\arabic{table}}
\setcounter{figure}{0}
\setcounter{table}{0}
\counterwithin{figure}{section}
\counterwithin{table}{section}

\section{The Asymptotic E-Detector Framework and Proofs}
\label{app:proofs}

Suppose we observe a stream $X_1, X_2, \dots \in \mathcal{X}$ arriving sequentially and we have a calibration dataset of size $m$, denoted by $\mathcal{C}_m = \{Z_1, \dots, Z_m\}$, drawn i.i.d.\ from the pre-change distribution $P$.
Let $\Pm$ denote the empirical distribution over $\mathcal{C}_m$.
The monitoring problem is to test:
\begin{align*}
    H_0&: \exists \ P, \ \text{s.t.} \ \mathbf{P} = P^{\infty} \\
    H_1&: \exists \tau^*\geq 1 \text{ and } Q \ne P \text{ s.t. } \mathbf{P} = P^{\tau^*} \times Q^{\infty}.
\end{align*}
The operating characteristics of a monitoring procedure are its average run length under the null $\ARL = \mathbb{E}_{H_0}[\tau]$ and its expected detection delay $\EDD = \mathbb{E}_{H_1}[\tau - \tau^* \mid \tau > \tau^*]$, where $\tau$ is the stopping time.
In this section, we use $C$ to denote constants for shorthand.

\subsection{Testing and monitoring by betting}

We begin by reviewing e-processes and e-detectors from the testing-by-betting framework.
More details can be found in \citep{Shin2024-dj} and \citep{Ramdas2025-uv}.

\paragraph{Baseline E-process}
Start with $\$1$ of initial wealth and, at each time $t$, bet against $H_0$ so that the wealth $E_t$ at $t$ is updated by $E_t = E_{t-1} \cdot (1 + \lambda_t r_t)$ where $\lambda_t \in (0,1)$ is a bet scaling factor and $r_t$ is the per-step return.
The bet is \emph{fair} under the null if $\mathbb{E}\bigl[r_t \mid \Fil_{t-1}\bigr] \leq 0$.
Along with the assumption that $1+\lambda_tr_t \geq 0$ for any $t$, $(E_t)_{t \geq 0}$ is a nonnegative supermartingale and thus an e-process under $H_0$.
By Ville's inequality, $\mathbb{P}_{H_0}(\sup_t E_t \geq 1/\alpha) \leq \alpha$ and thus the false alarm rate is controlled at level $\alpha$ uniformly over time.
Under $H_1$, the conditional mean of the return $r_t$ is positive and the wealth $E_t$ grows exponentially to trigger an alarm.

\paragraph{E-detector}
Because $\tau^*$ is unknown, no single e-process suffices: one started before the change expends its capital on pre-change observations.
Following \citet{Shin2024-dj}, at each timepoint $j \ge 1$ a fresh $\$1$ is invested in a new subgame betting against the null starting at index $j$.
The detector is the sum of all subgames:
\begin{equation}
M_t \;=\; \sum_{j \leq t} \Ej_t,
\qquad
\Ej_t \;=\; \prod_{s = j}^{t} (1+\lambda_s\rj_s)
\label{eq:edetector}
\end{equation}
with $M_0 = 0$.
When the subgames are e-processes starting at timepoints $j$, total wealth of all subgames under $H_0$ cannot outrun the money invested: $\mathbb{E}_{H_0}[M_{\tau}] \leq \mathbb{E}_{H_0}[\tau]$ for any stopping time $\tau$.
Defining the stopping time as $\tau = \inf\{t : M_t \geq 1/\alpha\}$ yields $\ARL = \mathbb{E}_{H_0}[\tau] \geq 1/\alpha$.
When $\rj_t$ does not depend on the subgame index $j$, the sum reduces to the Shiryaev--Roberts recursion
\begin{equation}
M_t \;=\; \bigl(1 + M_{t-1}\bigr) \cdot (1+\lambda_t r_t) ,
\label{eq:sr}
\end{equation}
allowing the total wealth $M_t$ to be updated in a single multiplication per step.

\paragraph{Asymptotic E-detectors}
As defined in the main manuscript, a process $\mathbf{M}_m = \{M_{m,t}\}_{t\geq0}$ with an associated stopping time $\tau_m = \inf\{t: M_{m,t}\geq 1/\alpha\} $ is an asymptotic e-detector if $\mathbf{M}_m$ satisfies
\begin{align}
\limsup_{m \to \infty} \left( \mathbb{E}_{H_0}[M_{m,\tau_m}] - \mathbb{E}_{H_0}[\tau_m] \right) \leq 0.
\end{align}

\subsection{Proof of Theorem 1}
Recall the SR detector which reuses the same calibration set $\mathcal{C}_m$ of size $m$ over time defined by
\begin{align*}
    M_{m,t} &= \sum_{j=1}^{t} E_{m,t}^{(j)}, \quad M_{m,0} = 0, \\
    E_{m,t}^{(j)} &= \begin{cases}
      \prod_{s=j}^{t} (1+ \lambda_s r_{m,s}) \ &\text{for }t \geq j \\
      1  &\text{otherwise}
    \end{cases}\\
r_{m,t} &= \hat{f}_t(X_t) - \Pm \hat{f}_t. \label{eq:asymp_Edetector}
\end{align*}
In this subsection, we present the proof of Theorem 1 in the main manuscript that the process $\mathbf{M}_{m} = \{M_{m,t}\}_{t\geq 0}$ is an asymptotic e-detector with asymptotic $\ARL$ control.

Suppose the following standard regularity conditions are satisfied.
\begin{assumption}[Boundedness]
\label{assump:boundedness-measurablility}
The witness functions satisfy $\hat{f}_t(x) \in [0,1]$ for all $t \ge 1$ and all $x \in \mathcal{X}$ and $\hat{f}_t$ is $\mathcal{F}_{t-1}$-measurable.
\end{assumption}
For simplicity, we assume $\hat{f}_t(x) \in [0,1]$ which is equivalent to the assumption in the main manuscript up to an affine transformation.
\begin{assumption}\label{assump:independence}
The calibration dataset $\mathcal{C}_m$ is a random sample from the pre-change distribution $P$ and is independent of the monitoring stream $X_1, X_2, \ldots$
\end{assumption}
\begin{assumption}[Uniformly integrable stopping time]\label{assump:uniformIntegrable-stopping}
The class of stopping time $\{\tau_m\}_{m\in \mathbb{N}} $ is uniformly integrable under $H_0$ (\ref{eq:nullHypothesis}).
More specifically, the following are satisfied by $\{\tau_m\}_m$:
\begin{enumerate}
    \item $\exists$ a finite $T^*$ such that $\forall m\geq 1, \E[\tau_m] \leq T^*$.
    \item $\forall \epsilon, \ \exists \delta$, such that $\forall$ measurable $A$ with $P(A) \leq \delta$ and $\forall m\geq 1$, $\E[|\tau_m|I_A] \leq \epsilon$.
\end{enumerate}
\end{assumption}

Note that we study behavior of the procedure under this uniform integrability assumption because if $\mathbb{E}_{H_0}[\tau_m] \rightarrow \infty$, then $\ARL$ trivially satisfies the run length requirement.

\begin{assumption}[\citep{Kosorok2008-pr}]\label{assump:classifier-GC}
All witness functions $\{\hat{f}_t\}$ lie in a fixed $P$-Glivenko-Cantelli (GC) class $\mathbb{F}$.
\end{assumption}

\begin{proof}
    We decompose the one-step increment of $M_{m,t}$:
    \begin{equation}
    M_{m,t} - M_{m,t-1} = 1 + D_{m,t} + \xi_{m,t},
    \end{equation}
    where $\xi_{m,t}$ is a martingale difference sequence:
    \begin{align}
    \xi_{m,t} &= (M_{m,t} - M_{m,t-1}) - \mathbb{E}_{H_0}[M_{m,t} - M_{m,t-1} \mid \mathcal{F}_{t-1}, \mathcal{C}_m] \nonumber \\
    &= \lambda_t(r_{m,t} - \delta_{m,t})(1 + M_{m,t-1}),
    \end{align}
    where $\delta_{m,t} = \E [r_{m,t} \mid \mathcal{F}_{t-1}, \mathcal{C}_m] = (P - \Pm) \hat{f}_t$ and $D_{m,t}$ is the drift:
    \begin{equation}
    D_{m,t} = \lambda_t \delta_{m,t} (1 + M_{m,t-1}).
    \end{equation}
    Summing to $\tau_m$ and taking expectations:
    \begin{equation}
    \mathbb{E}_{H_0}[M_{m,\tau_m}] - \mathbb{E}_{H_0}[\tau_m] = \mathbb{E}_{H_0}\left[\sum_{t=1}^{\tau_m} D_{m,t}\right] + \mathbb{E}_{H_0}\left[\sum_{t=1}^{\tau_m} \xi_{m,t}\right].
    \label{eq:decompose}
    \end{equation}
    It is sufficient to show that the right hand side is upper bounded by a term converging to 0.

    \textbf{Martingale term:}
    The partial sum process $S_T := \sum_{t=1}^{T} \xi_{m,t}$ is a martingale with respect to $\sigma(\Fil_T, \mathcal{C}_m)$.
    For $t \le \tau_m$, $M_{m,t-1} < 1/\alpha$ and, by Assumption \ref{assump:boundedness-measurablility}, $|r_{m,t} - \delta_{m,t}| \le 2$, so increments are uniformly bounded: $|\xi_{m,t}| \le 2 (1 + 1/\alpha)$.
    By Assumption \ref{assump:uniformIntegrable-stopping}, there exists $T^*$ such that $\mathbb{E}_{H_0}[\tau_m] \le T^* < \infty$.
    Then, the Optional Stopping Theorem implies
    \begin{equation*}
    \mathbb{E}_{H_0}\left[\sum_{t=1}^{\tau_m} \xi_{m,t}\right] = 0.
    \end{equation*}

    \textbf{Drift term:}
    Since $M_{m,t-1} < 1/\alpha$ for all $t \le \tau_m$ and $\lambda_t \in (0,1)$, we have
    \begin{align*}
         \left| \sum_{t=1}^{\tau_m} D_{m,t} \right|
            \leq C \sum_{t=1}^{\tau_m} |\delta_{m,t}|
            \leq C \tau_m \sup_t |(P - \Pm)\hat{f}_t| := \Delta_m
    \end{align*}
    for some positive constant $C$.
    Since $\hat{f}_t$ is bounded, $\sup_t |(P - \Pm)\hat{f}_t|$ is positive and bounded.
    Together with uniform integrability of $\tau_m$, $\Delta_m$ is also uniformly integrable.
    Moreover, $\Delta_m \to 0$ in probability since all witness functions lie in a Glivenko-Cantelli class and $\tau_m$ is uniformly integrable.
    Therefore, by Vitali convergence theorem \citep{Kallenberg2021-tg},
    \begin{align*}
        \E \left| \sum_{t=1}^{\tau_m} D_{m,t} \right| \leq \E[\Delta_m] \to 0,
    \end{align*}
    establishing the claim.
\end{proof}

\begin{corollary}[Asymptotic $\ARL$ control]
Asymptotic e-detectors provide asymptotic $\ARL$ control under the null:
\begin{equation}
\liminf_{m \to \infty} \mathrm{ARL}_{0,m} = \liminf_{m \to \infty} \mathbb{E}_{H_0}[\tau_m] \geq \frac{1}{\alpha}.
\end{equation}
\end{corollary}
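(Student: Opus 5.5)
The corollary should follow from Theorem~\ref{thm:main} together with the observation that, at the stopping time, the detector has crossed the threshold. Fix $\alpha\in(0,1)$ and let $\tau_m=\inf\{t\ge 1: M_{m,t}\ge 1/\alpha\}$.

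The first step is to show that $\tau_m<\infty$ almost surely under $H_0$. This follows from Assumption~\ref{assump:uniformIntegrable-stopping}, which gives $\mathbb{E}_{H_0}[\tau_m]\le T^*<\infty$. Consequently $M_{m,\tau_m}$ is well defined, and by the definition of $\tau_m$ it satisfies $M_{m,\tau_m}\ge 1/\alpha$ pointwise. Taking expectations gives $\mathbb{E}_{H_0}[M_{m,\tau_m}]\ge 1/\alpha$ for every $m$.

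The second step is the rearrangement
\[
\mathbb{E}_{H_0}[\tau_m] = \mathbb{E}_{H_0}[M_{m,\tau_m}] - \bigl(\mathbb{E}_{H_0}[M_{m,\tau_m}]-\mathbb{E}_{H_0}[\tau_m]\bigr) \ge \frac{1}{\alpha} - \bigl(\mathbb{E}_{H_0}[M_{m,\tau_m}]-\mathbb{E}_{H_0}[\tau_m]\bigr).
\]
All quantities here are finite: $\tau_m$ is integrable, and the decomposition in the proof of Theorem~\ref{thm:main} bounds $M_{m,\tau_m}$ by $\tau_m$ plus terms of order $\tau_m$. We then take $\liminf_{m\to\infty}$ of both sides. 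Since $\liminf(-a_m)=-\limsup a_m$, this gives
\[
\liminf_{m\to\infty}\mathbb{E}_{H_0}[\tau_m]\ge \frac{1}{\alpha}-\limsup_{m\to\infty}\bigl(\mathbb{E}_{H_0}[M_{m,\tau_m}]-\mathbb{E}_{H_0}[\tau_m]\bigr)\ge \frac{1}{\alpha}.
\]
The last inequality is exactly the asymptotic e-detector property \eqref{eq:asymptotic-edetector}, which Theorem~\ref{thm:main} establishes.

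The only potential subtlety is the finiteness and well-definedness of $M_{m,\tau_m}$ needed to subtract expectations. If Assumption~\ref{assump:uniformIntegrable-stopping} were dropped, the case $\mathbb{E}_{H_0}[\tau_m]=\infty$ along a subsequence would satisfy the bound trivially. We would therefore handle such a subsequence separately and apply the argument above on its complement. Beyond this, the argument is immediate.
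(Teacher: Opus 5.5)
Your proposal is correct and follows the paper's proof. Both arguments use $M_{m,\tau_m}\ge 1/\alpha$ at the stopping time and then rearrange the asymptotic e-detector inequality; your $\liminf/\limsup$ manipulation is equivalent to the paper's $\epsilon$--$m_0$ argument, and your explicit check that $\tau_m<\infty$ almost surely (via $\mathbb{E}_{H_0}[\tau_m]\le T^*$) fills in a step the paper leaves implicit when it passes from ``whenever $\tau_m<\infty$'' to $\mathbb{E}_{H_0}[M_{m,\tau_m}]\ge 1/\alpha$.
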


\begin{proof}
By definition of $\tau_m = \inf\{t : M_{m,t} \geq 1/\alpha\}$, $M_{m,\tau_m} \geq 1/\alpha$ whenever $\tau_m < \infty$.
Hence $\frac{1}{\alpha} \leq \mathbb{E}_{H_0}[M_{m,\tau_m}]$.
Since $M_m$ is an asymptotic e-detector, for any $\epsilon > 0$ there exists $m_0$ such that for all $m \geq m_0$:
\begin{equation}
\frac{1}{\alpha} \leq \mathbb{E}_{H_0}[M_{m,\tau_m}] \leq \mathbb{E}_{H_0}[\tau_m] + \epsilon.
\end{equation}
Rearranging gives $\mathbb{E}_{H_0}[\tau_m] \geq 1/\alpha - \epsilon$ for all $m \geq m_0$.
Since $\epsilon > 0$ was arbitrary, taking the limit inferior yields $\liminf_{m \to \infty} \mathbb{E}_{H_0}[\tau_m] \geq 1/\alpha$.
\end{proof}

\section{Implementation Details}
\label{app:implementation}

\subsection{Adaptive betting hyperparameters}

\paragraph{Bet scaling factor.}
The bet scaling factor can adapt over time and should balance between aggressive betting (high $\lambda$, faster detection but higher variance) and conservative betting (low $\lambda$, slower but more stable).
While adaptive tuning of $\lambda_t$ is possible within the e-detector framework and may improve performance, this work uses a fixed value of $\lambda =0.5$ and leaves further optimization to future work.

\paragraph{Changepoint learner.}
The changepoint learner estimates when the distribution shifted, which determines the pseudo-labels for classifier training.
The mean-shift estimator (Equation~\ref{eqn:cp_meanShift}) is appropriate when shifts are expected to be dense across features, as in embedding spaces where many coordinates may move together.
The sparse estimator (Equation~\ref{eqn:cp_sparse}) is preferable when shifts are expected to affect only a subset of features, as with interpretable concepts where a small number of attributes may change.
In our experiments, we use the mean-shift estimator for the black-box bettor (operating on dense embeddings) and the sparse estimator with maximum subset size $B = 4$ for the interpretable bettor (operating on disentangled concepts).

\paragraph{Classifier.}
The classifier maps features to a score distinguishing pre- from post-change observations.
Any classifier producing calibrated probabilities is suitable; we use logistic regression for its simplicity and interpretability, and apply regularization to prevent overfitting when the data representation is high-dimensional.

\paragraph{Calibration dataset size.}
The calibration dataset size $m$ controls the accuracy of the empirical approximation to the pre-change distribution.
Larger $m$ reduces calibration error and tightens $\ARL$ control, but comes with practical costs: more pre-change data must be available before monitoring begins, and for interpretable bettors, the calibration set must be annotated with each proposed concept, incurring LLM query costs linear in $m$.
Fortunately, our experiments found $\ARL$ control to be robust across a wide range of calibration sizes; see Appendix~\ref{app:vary_m} for simulations demonstrating that even $m = 30$ can provide effective control.

\paragraph{Burn-in period.}
An initial burn-in period is needed so that there is sufficient sample size to meaningfully estimate the changepoint and train the classifier.
During burn-in, the e-detector does not place bets, so $M_t$ remains at its initial value of 0.
The burn-in period should be set such that data during burn-in is nearly guaranteed to be pre-change, to avoid significant delays in detecting shifts.

\subsection{The LLM-concept bettor}
\label{app:llm_bertopic}

The \llmconcept{} bettor uses a four-stage pipeline that interleaves LLMs and unsupervised clustering of text to propose and select concepts at each time point. This four-step procedure is used instead of a zero-shot LLM pipeline, as the latter is often impractical and far from optimal for monitoring long data streams: an LLM is unlikely to be able to fit the entire history into its context window and the LLM is likely to forget observations in the middle of the stream.
Instead, this pipeline combines LLM reasoning with statistical/data-analytic tools, which can be viewed as an agentic process.
This leads to a pipeline that essentially conducts two rounds of changepoint detection: steps 1-2 are to brainstorm topics and changepoints, while steps 3-4 are to formalize the concepts, annotations, and changepoint estimation.
This pipeline is modeled after the agentic pipeline in \citep{Feng2025-qd}, which was originally designed for classification and is extended here to the task of changepoint detection.

\paragraph{Step 1: Keyphrase extraction.}
For each past observation, an LLM is asked to extract a set of keyphrases describing its content according to some \texttt{keyphrase-extraction-prompt}.
(For image streams, we use a VLLM.)
The \texttt{keyphrase-extraction-prompt} provides the monitoring team a way to communicate their priorities regarding which general categories of concepts and shifts are of practical interest.
Figure~\ref{fig:prompt_keyphrase} shows an example prompt for the CEBaB experiment.
Note that keyphrase extraction should be viewed as a brainstorming step, as it is not formal concept extraction nor concept selection.
Rather a shift in the types of keyphrases can provide clues as to the types of shifts that are happening in the data.

\begin{figure}[H]
\centering
\begin{tcolorbox}[colback=gray!5, colframe=gray!50
]
\small\ttfamily
Extract all concepts mentioned this restaurant review. These will be candidates for a central dictionary of concepts for characterizing restaurant reviews, and will eventually be input features to a concept bottleneck model for monitoring changes in reviews written over time. Only list factors explicitly mentioned in the review. Do not infer or add details unless explicitly stated.

For each concept mentioned in the review, provide:

1. A word or phrase to represent that concept (1-2 words)

2. Synonyms or alternative phrasings

3. Broader generalizations that capture similar concepts

Requirements:

- Each descriptor should be <=2 words. One word descriptors are encouraged.

- The descriptor should state distinguishing features. Focus on factors relevant to the dining experience.

- List specific terms, synonyms, and increasingly general concepts.

- Separate items with commas\\[0.5em]

Review(s): "\{note\}"\\[0.5em]
Output Format:\\
\{ "reasoning": "...", "keyphrases": [...] \}
\end{tcolorbox}
\vspace{-0.5cm}
\caption{Keyphrase extraction prompt (abbreviated) for CEBaB experiment.}
\label{fig:prompt_keyphrase}
\end{figure}

\paragraph{Step 2: Topic modeling and initial changepoint estimation.}
Keyphrases are clustered into topics using BERTopic \citep{Grootendorst2022-jl}.
To speed up BERTopic, one can further use an online variant, as long as each of its internal components allow for online learning.
For instance, our experiments use BERTopic with incremental PCA and clusters keyphrases into topics with mini-batch $k$-means.
Then based on the extracted keyphrases, each observation is mapped to all its cluster assignments (binary vector with 1s for clusters that are present and 0s for clusters that are absent).
We then estimate a working changepoint $\hat{\tau}^{(topic)}$ using the sparse changepoint statistic (Equation~\ref{eqn:cp_sparse}) on this topic representation, which identifies both the candidate changepoint and the topics with the largest prevalence shifts.

\paragraph{Step 3: Concept proposal.}
An LLM is prompted to propose formal yes/no concept questions based on the top-shifted BERTopic topics and their representative documents.
For each of the top topics, the LLM prompt includes each topic's  keywords, representative documents, and how the prevalence of the topic has shifted.
Figure~\ref{fig:prompt_proposer} shows one such example for CEBaB.

\begin{figure}[H]
\centering
\begin{tcolorbox}[colback=gray!5, colframe=gray!50
]
\small\ttfamily
We are monitoring for concepts that shift in prevalence in restaurant reviews that are posted over time. A concept has to be defined in terms of a yes/no question of the form "Does the review mention the...?". [...] We have 3 concepts so far:\\
* X0 = Does the review mention the ambience or atmosphere of the restaurant, including aspects like crowding, romantic setting, or a pleasant/charming atmosphere?\\
* X1 = Does the review mention the food being well-prepared or the variety of food offered?\\
* X2 = Does the review mention the ambiance or atmosphere of the restaurant?\\[0.5em]
To come up with the 4th concept, we conducted a data-driven analysis using topic modeling (BERTopic) on extracted keyphrases. Below are the topics that show the largest shifts in prevalence over time (excluding topics already used for existing concepts). [...]\\[0.5em]
* topic 22 (shift: -0.21): "experience" (n=7), "staff performance" (n=4), "friendly staff" (n=3), "crowded" (n=3), "overall experience" (n=3), "positive experience" (n=2), "good experience" (n=2)\\
* topic 9 (shift: +0.18): "atmosphere" (n=33)\\
* topic 11 (shift: +0.15): "dining experience" (n=22), "dining environment" (n=2), "dining occasion" (n=1), "dining time" (n=1), "dining purpose" (n=1), "dining service" (n=1), "restaurant environment" (n=1)\\
* topic 0 (shift: +0.15): "price" (n=13), "expensive" (n=8), "cost" (n=6), "overpriced" (n=4), "high cost" (n=4), "highly rated" (n=2), "pricey" (n=2)\\[0.5em]
[...]\\
YOUR TASK: Propose 2 candidate concepts to use as inputs into a final interpretable changepoint estimation method, based on the topic analysis. [...]\\[0.5em]
Output Format:\\
\{ "reasoning": "...", "concepts": [\{"definition": "...", "topic\_indices": [...]\}, ...] \}
\end{tcolorbox}
\vspace{-0.5cm}
\caption{Concept proposal prompt (abbreviated) for CEBaB experiment. Top keyphrase clusters (topics) are shown from an example replicate.}
\label{fig:prompt_proposer}
\end{figure}

\paragraph{Step 4: Concept annotation and selection.}
For each of the proposed concepts, the LLM is asked to extract the answer to the formal yes/no concept question for each observation in the history.
Applying sparse changepoint estimation (Equation~\ref{eqn:cp_sparse}) to the LLM-extracted concepts, one finally obtains the final changepoint estimate $\hat{\tau}_t$ for formalized concepts, as well as the selection of top concepts.
These selected concepts, coupled with the LLM, then define the data representation function $\phi_{\texttt{C}_t}$ used for betting.
Figure~\ref{fig:prompt_annotator} shows an example annotation prompt.

\begin{figure}[H]
\centering
\begin{tcolorbox}[
colback=gray!5,
colframe=gray!50
]
\small\ttfamily
You will be given a restaurant review. Answer the following question with a probability from 0 to 1. Answer with 0 or 1 when possible, but a probability between 0 and 1 is allowed if needed to express uncertainty.\\[0.5em]
Question: \{prompt\_question\}\\[0.5em]
Restaurant review: \{text\}\\[0.5em]
Example answer:\\
\{ "reasoning": "...", "answer": 1 \}
\end{tcolorbox}
\vspace{-0.5cm}
\caption{Concept annotation prompt for the CEBaB experiment.}
\label{fig:prompt_annotator}
\end{figure}

Finally, we note that the concept extractor must not itself exhibit temporal shifts, as this would violate $\ARL$ guarantees.
In our implementation of the LLM concept extractor, we ensure stationarity by processing each observation independently, extracting each concept independently, and retrying failed LLM queries rather than imputing default values (as query failure rates may also vary over time).
Future work may consider how to ensure temporal consistency of LLM concept extraction over longer periods of time, where LLM providers may silently switch model versions or hardware.

\section{Experiments}
\label{app:experiments}
\subsection{Experimental settings}

Table~\ref{tab:dataset_summary} summarizes the experimental setup for each dataset.
All experiments use 100 simulation replicates for monitors other than \llmconcept{} and \ensemblearm{} and 30 replicates for \llmconcept{} and \ensemblearm{} due to cost constraints.
On CEBaB and CUB, the non-LLM bettors also run on $100$ null streams to illustrate $\ARL$ control.
The code for the \llmconcept{} bettor is carefully implemented such that control of the $\ARL$ for non-LLM bettors also implies $\ARL$ control for the \llmconcept{} bettor.
In particular, the LLM only affects the bettor through the choice of features, which is made from past observations, so the witness at step $t$ is still fit before $X_t$ is observed and never uses the calibration set, as Conditions~(i) and~(iii) of Assumption~\ref{ass:regularity} require.
Each annotation call labels a single observation for a single concept, so $X_t$ and the calibration observations are annotated by the same procedure, independently of one another.
Other than the choice of features, the \llmconcept{} bettor uses the same classifier, plug-in centering on the calibration set, and SR recursion as the \embeddingarm{} and \saearm{} bettors, whose simultaneous $95\%$ bands for the empirical $\ARL$ on the null streams contain or lie above the nominal $\ARL$ (Figures~\ref{fig:cebab_calibration} and~\ref{fig:cub_calibration}).

\begin{table*}[t]
  \centering
  \caption{Summary of experimental settings across the three datasets.}
  \label{tab:dataset_summary}
  \small
  \begin{tabular}{lp{3cm}p{3cm}p{3cm}p{3cm}}
    \toprule
    & \textbf{CEBaB} & \textbf{CUB: yellow bill} & \textbf{CUB: white crown} & \textbf{AI scribe} \\
    \midrule
    Data & Restaurant reviews from CEBaB & Bird images from CUB-200-2011 & Bird images from CUB-200-2011 & HPI section of Rheumatology clinical notes \\
    \midrule
    Shifted attribute & Whether review mentions ambiance & Whether bird has yellow bill in image & Whether bird has white crown in image & Clinical notes from 60 days before vs after model update on 2026-02-25 \\
    Prevalence pre$\to$post & 0.52$\to$0.88 & 0.17$\to$0.64 & 0.30$\to$0.95 & N/A \\
    \midrule
    Embedding model & MiniLM & CLIP ViT-B/32 & CLIP ViT-B/32 & MiniLM \\
    Embedding dim $d$ & 384 & 512 & 512 & 384 \\
    \midrule
    $n_{\text{pre}}$ & 50 & 50 & 50 & 100 \\
    $n_{\text{post}}$ & 300 & 300 & 300 & 150 \\
Burn-in & 30 & 30 & 30 & 75 \\
    \bottomrule
  \end{tabular}
\end{table*}

\begin{table*}[t]
  \centering
\caption{Text exemplars used to define pre-specified drift directions for fixed monitors. Each direction is defined by an in-class (IC) and out-of-class (OC) text description, embedded to form a unit drift vector.}
  \label{tab:ctm_exemplars}
  \small
  \begin{tabular}{l l p{5.5cm} p{5.5cm}}
    \toprule
    \textbf{Dataset} & \textbf{Direction} & \textbf{IC text} & \textbf{OC text} \\
    \midrule
    \multirow{2}{*}{CEBaB}
      & Informed & a typical restaurant review describing the customer's experience of the visit & a restaurant review that comments on the ambiance or decor of the restaurant \\
      & Unrelated & a typical restaurant review describing the customer's experience of the visit & a restaurant review that mentions parking the car near the restaurant \\
    \midrule
    \multirow{2}{*}{CUB (yellow bill)}
      & Informed & a photo of a bird & a photo of a bird with a yellow bill \\
      & Unrelated & a photo of a bird & a photo of a bird perched on a wire or fence post \\
    \midrule
    \multirow{2}{*}{CUB (white crown)}
      & Informed & a photo of a bird & a photo of a bird with a white crown \\
      & Unrelated & a photo of a bird & a photo of a bird perched on a wire or fence post \\
    \midrule
    \multirow{2}{*}{AI scribe}
& Informed & an initial draft from an AI scribe with major editing by the clinician, deleting sections and removing verbose and irrelevant content & an initial draft from an AI scribe with major editing by the clinician, with minimal to light editing like medication name fixes \\
      & Unrelated & an initial draft from an AI scribe with major editing by the clinician & an initial draft from an AI scribe with major editing by the clinician where draft becomes less accurate and needs more editing \\
    \bottomrule
  \end{tabular}
\end{table*}

\paragraph{Pool construction.}
For CEBaB and CUB, we construct disjoint pre-change (p0) and post-change (p1) pools and inject a controlled shift by upsampling observations with a target attribute in the post-change pool.
Specifically, p1 samples are drawn with probability proportional to $\sigma(\beta_0 + \beta \cdot a(x))$, where $a(x) \in \{0,1\}$ is the binary attribute indicator and $\sigma$ is the logistic function.
We use $\beta = 4$, $\beta_0 = -4$ for CEBaB and $\beta = 6$, $\beta_0 = -4$ for CUB.
For CEBaB, we first sample one review variant per original review from the CEBaB counterfactual edits, ensuring each unique review appears at most once across all pools.
For CUB, the population is restricted to seabird genera (Gull, Tern, Albatross, Jaeger, Puffin, Cormorant, Fulmar, Kittiwake, Auklet, Merganser, and Pelican), comprising 1,945 images across 34 species.
For the white crown attribute, we use consensus labels that require an image's annotation to agree with the majority label of its species, as raw CUB annotations exhibit within-species variation.
In every experiment, a stream draws its calibration set and pre-change observations from the pre-change pool without replacement and its post-change observations from the post-change pool, and null streams draw all observations from the pre-change pool.
For CEBaB, the reviews are split by original review into three disjoint sides, with $20\%$ for training the \texttt{SAE}, $55\%$ for the pre-change pool and $25\%$ for the post-change pool, and the pre-change pool and the \texttt{SAE} side keep each variant with equal probability.
For CUB, the images are randomly permuted and the first $730$ form the pre-change pool, the post-change pool is $300$ images drawn from the remaining $1{,}215$, and the $915$ images in neither pool train the \texttt{SAE}.

\paragraph{AI scribe.}
The AI scribe dataset consists of History of Present Illness (HPI) sections from rheumatology encounters, collected under IRB approval at UCSF.
Each observation is the structured difference between the AI-generated draft and the clinician-edited final note.
The shift corresponds to a vendor model update intended to improve performance for rheumatology; the vendor did not disclose specifics of the update beyond its date, though clinicians reported subjective improvements in draft quality.
Because this is a real-world temporal shift without ground-truth labels, prevalence statistics are not available.
The pre- and post-change pools hold the HPI sections from the $60$ days before and the $60$ days after the update, $300$ and $190$ notes, respectively.
Each edit is a word-level deletion, insertion or replacement between the draft and the signed note, shown with three words of surrounding context.
The \texttt{SAE} is trained on a random $20\%$ of the pre-change notes, which streams never draw.

\paragraph{Adaptive bettor settings.}
Every adaptive bettor bets $1 + \lambda (\hat{f}_t - \hat{f}_{\text{null}})$ with $\lambda = 0.5$ against a calibration set of $m = 30$ pre-change observations.
The changepoint statistics use whitened features and a minimum block length of $10$, and the \saearm{} bettor, like the \llmconcept{} bettor, selects a subset of its latent features with the sparse statistic.
The classifier is a logistic regression on standardized features with balanced class weights,
fit with a ridge penalty for the \embeddingarm{} bettor and no penalty otherwise.
The \saearm{} is a TopK sparse autoencoder with $32$ latent features and $K = 3$.
The \ensemblearm{} averages the e-detectors $M_t$ of the \llmconcept{} and \embeddingarm{} bettors on the same stream, as the average of e-detectors is itself an e-detector.
BERTopic for the \llmconcept{} bettor clusters keyphrases into $25$ topics at every update.

\paragraph{Baselines.}
The fixed monitors run the Shiryaev--Roberts procedure on conformal p-values with the Simple Jumper betting strategy (jump rate $0.01$) \citep{Vovk2021-re}.
Half of the calibration set serves as the reference set for the nonconformity score and the other half seeds the conformal p-values.
\centroidarm{} scores an observation by its cosine distance to the centroid of the reference set, and the guided variants score it by its cosine similarity with a fixed direction.
For \informedshift{} and \unrelatedshift{}, the direction is the normalized difference between the embeddings of the two descriptions in Table~\ref{tab:ctm_exemplars}, using the text encoder that matches the observation embedding.
For \oraclearm{}, the direction is the normalized difference between the mean embeddings of the post- and pre-change pools.

\subsection{Additional results}
\label{app:extra_results}

\paragraph{\texorpdfstring{ARL$_0$}{ARL0} is controlled.}
Figures~\ref{fig:cebab_calibration} and~\ref{fig:cub_calibration} show the empirical $\ARL$ on the CEBaB and CUB null streams.

\paragraph{Learned concepts for AI scribe experiment.}
Table~\ref{tab:scribe_themes} groups the AI scribe concepts of Figure~\ref{fig:results}(h) into themes, and Table~\ref{tab:scribe_examples} lists the concepts of four AI scribe streams.

\paragraph{Simultaneous confidence bands.}
We show each performance curve with a bootstrapped $95\%$ simultaneous confidence band.

\paragraph{Compute.}
All experiments ran on CPUs on an internal academic Slurm cluster.
Each monitoring stream ran as its own single-node job with $2$ CPU cores and $10$ GB of memory, and the sentence embeddings and SAE training also ran on CPU.
LLM calls went to \texttt{gpt-4.1-mini} through an institution-approved Azure OpenAI deployment.

\paragraph{Licenses.}
CEBaB is distributed under CC BY 4.0, and CUB-200-2011 is restricted to non-commercial research and educational use.
The MiniLM sentence encoder, the sentence-transformers library, and HypotheSAEs are released under Apache-2.0, and BERTopic and the CLIP code are released under MIT.
We accessed \texttt{gpt-4.1-mini} under the Azure OpenAI service terms, and the AI scribe data were used under IRB approval and are not publicly released.

\paragraph{Cost.}
At \texttt{gpt-4.1-mini} prices of \$0.40 and \$1.60 per million input and output tokens, the median \llmconcept{} stream cost \$2.26 on CEBaB, \$3.42 on CUB yellow bill, \$1.64 on CUB white crown and \$13.59 on AI scribe.
A stream took a median of $27$ minutes, ranging from $11$ minutes to $1.8$ hours.

\begin{figure}[t]
  \centering
  \includegraphics[width=0.42\linewidth]{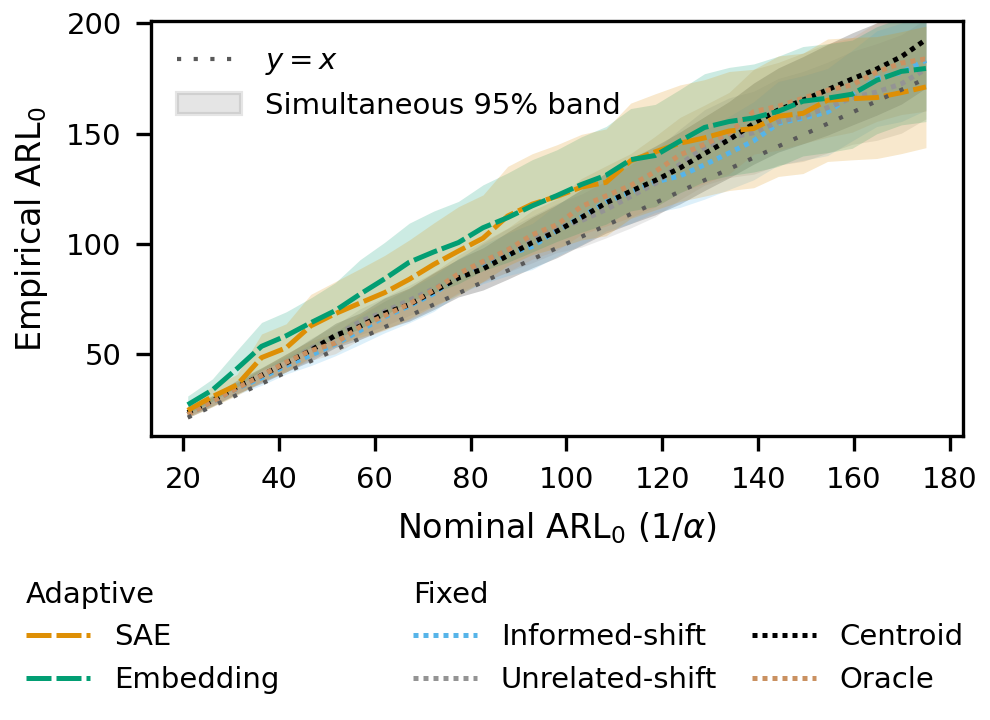}
  \caption{Empirical $\ARL$ against nominal $\ARL$ on the $100$ CEBaB ambiance null streams.
  Shaded regions are simultaneous $95\%$ bands over streams, and the dotted line marks empirical $\ARL$ equal to nominal $\ARL$.}
  \label{fig:cebab_calibration}
\end{figure}

\begin{figure*}
  \centering
  \includegraphics[width=0.42\textwidth]{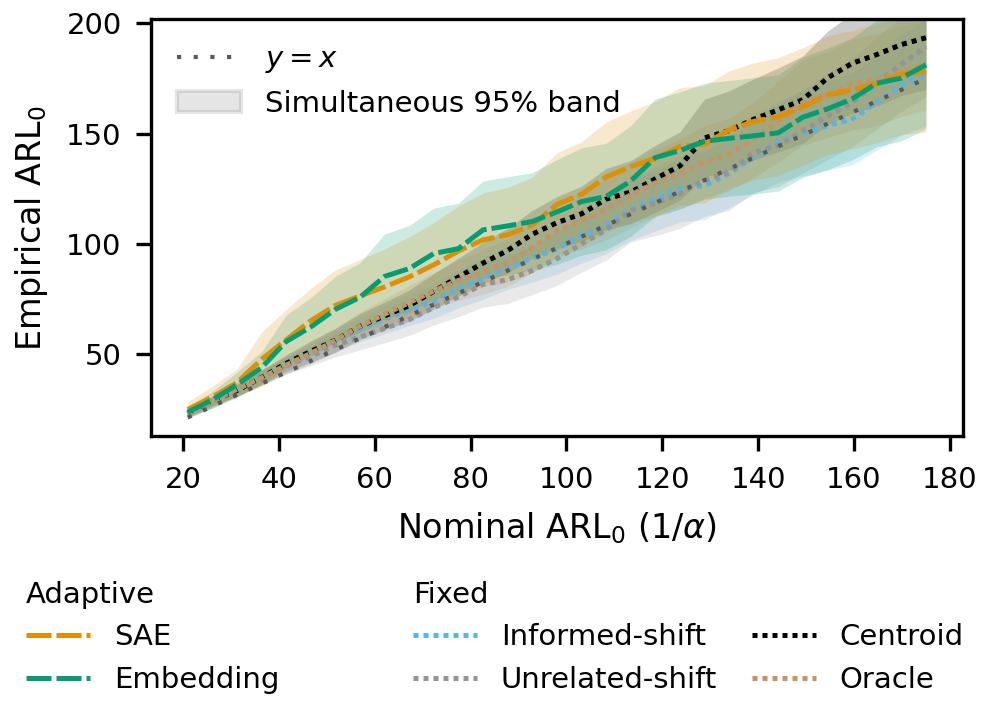}
  \hfill
  \includegraphics[width=0.42\textwidth]{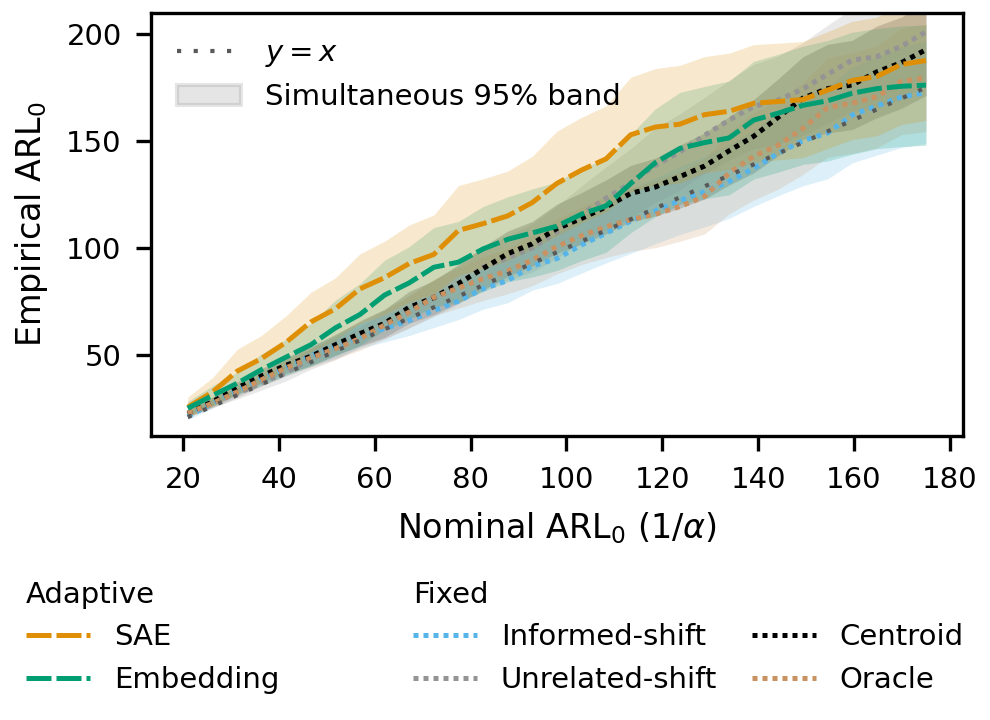}
  \caption{Empirical $\ARL$ against nominal $\ARL$ on the CUB yellow bill (left) and white crown (right) null streams, over $100$ null streams per setting.
  Shaded regions are simultaneous $95\%$ bands over streams, and the dotted line marks empirical $\ARL$ equal to nominal $\ARL$.}
  \label{fig:cub_calibration}
\end{figure*}

\begin{table*}[t]
  \centering
  \caption{Themes of the AI scribe concepts in Figure~\ref{fig:results}(h), over the four concepts active at the end of each of $18$ streams.
  An LLM judge (\texttt{gpt-4.1-mini}) assigns each concept to a theme from its text alone, using a fixed codebook based on the vendor's stated purpose for the update and the clinicians' reports of editing less.
  The first two themes come from these reports; the third theme is named after reading concepts that fit neither.
  Before $\to$ after shows the mean fraction of notes for which a concept is annotated as present, before and after the changepoint.
  }
  \label{tab:scribe_themes}
  \footnotesize
  \setlength{\tabcolsep}{4pt}
\begin{tabular}{p{1.6cm}p{3.8cm}ccp{5.2cm}}
    \toprule
    Theme & Codebook definition & \# Concepts & Before $\to$ after & Example concepts (before $\to$ after) \\
    \midrule
    Unneeded content & Draft content that does not belong in the note because it is unnecessary, redundant, verbose, non-clinical, introductory or identifying, or the removal, condensing or restructuring of such content, whether clinical or not & 46 & 0.79 $\to$ 0.60 & Does the initial draft include unnecessary or irrelevant non-clinical information such as review of systems, introductory sentences, or personal details? ($0.93 \to 0.60$) \newline Does the edit remove redundant or unnecessary note sections (including review of systems or initial summary)? ($0.86 \to 0.54$) \\
    \midrule
    Clinical correction & Clinical content (diagnoses, symptoms, medications, treatment) that the clinician adds, corrects or clarifies, or that the draft is missing & 21 & 0.56 $\to$ 0.61 & Does the edit correct or improve diagnostic information? ($0.38 \to 0.59$) \newline Does the initial draft miss or have incorrect medication, treatment, or procedure details? ($0.59 \to 0.43$) \\
    \midrule
    Non-clinical correction & Non-clinical information (patient or social history, follow-up, administrative or demographic details) that the clinician adds, corrects or clarifies, or that the draft is missing & 5 & 0.51 $\to$ 0.67 & Does the edit improve the accuracy and completeness of patient, social history, or provider demographic information? ($0.50 \to 0.68$) \\
    \bottomrule
  \end{tabular}
\end{table*}

\begin{table*}[t]
  \centering
  \caption{Concepts active at the end of the first four AI scribe streams in Figure~\ref{fig:results}(h), with the theme each is assigned in Table~\ref{tab:scribe_themes} and the fraction of notes for which it is annotated as present before and after the update.}
  \label{tab:scribe_examples}
  \footnotesize
  \setlength{\tabcolsep}{4pt}
  \begin{tabular}{c p{9.6cm} l c}
    \toprule
    Stream & Concept & Theme & Before $\to$ after \\
    \midrule
    \multirow{4}{*}{1} & Does the edit add or correct procedural or diagnostic details based on inaccuracies or omissions in the initial draft? & Clinical correction & $0.56 \to 0.67$ \\
     & Does the edit clarify or correct medication, management, or demographic information based on inaccuracies or omissions in the initial draft? & Clinical correction & $0.61 \to 0.72$ \\
     & Does the edit add or correct missing or inaccurate patient history and administrative details (such as functional status, anatomical location, provider identification, or lab data) in the initial draft? & Non-clinical correction & $0.63 \to 0.76$ \\
     & Does the edit add or correct missing treatment and clinical history details based on inaccuracies or omissions in the initial draft? & Clinical correction & $0.53 \to 0.64$ \\
    \midrule
    \multirow{4}{*}{2} & Does the edit remove unnecessary or irrelevant patient-related information (such as identification, introduction, background, history, or personal details) from the initial draft? & Unneeded content & $0.82 \to 0.66$ \\
     & Does the edit improve procedural and clinical detail documentation by clarifying, correcting, or adding missing information based on the initial draft? & Clinical correction & $0.53 \to 0.72$ \\
     & Does the edit improve the note content by adding, clarifying, condensing, or editing the clinical note content? & Clinical correction & $0.64 \to 0.84$ \\
     & Does the edit remove patient identifiers or identifiable patient details from the initial draft? & Unneeded content & $0.78 \to 0.52$ \\
    \midrule
    \multirow{4}{*}{3} & Does the initial draft include unnecessary or irrelevant non-clinical information such as review of systems, introductory sentences, or personal details? & Unneeded content & $0.93 \to 0.60$ \\
     & Does the edit remove patient identifiers or sensitive non-clinical information? & Unneeded content & $0.81 \to 0.54$ \\
     & Does the edit remove unnecessary, redundant, or irrelevant content? & Unneeded content & $0.91 \to 0.68$ \\
     & Does the edit restructure, clarify, or remove unnecessary note structure or introductory content? & Unneeded content & $0.95 \to 0.73$ \\
    \midrule
    \multirow{4}{*}{4} & Does the edit restructure the note and remove unnecessary introductory sentences or note sections from the initial draft? & Unneeded content & $0.85 \to 0.61$ \\
     & Does the edit correct inaccurate information in the initial draft, such as medication orders, clinician names, or treatment details? & Clinical correction & $0.54 \to 0.74$ \\
     & Does the edit remove personal or demographic sensitive information from the initial draft? & Unneeded content & $0.79 \to 0.54$ \\
     & Does the edit remove patient identifiers or personal information from the initial draft? & Unneeded content & $0.76 \to 0.58$ \\
    \bottomrule
  \end{tabular}
\end{table*}

\section{Additional Experiments}
\subsection{Evaluating \texorpdfstring{ARL$_0$}{ARL0} control for varying calibration dataset sizes}
\label{app:vary_m}

To evaluate $\ARL$ control for finite calibration dataset sizes $m$, we simulate null streams at varying values of $m$ for two settings: a fixed versus learned witness function.
In the fixed-score setting, scores $\hat{f}_t(X_t)$ and $\hat{f}_t(Z_i)$ are drawn i.i.d.\ from a $\mathrm{Beta}(2,2)$ distribution.
In the learned-score setting, the data $X_t$ is drawn from a $5$-dimensional standard normal distribution with the bettor continually refitting the changepoint and retraining a logistic classifier.
The calibration dataset size $m$ is varied across values $\{10, 30, 100, 1000\}$.

As shown in Figure~\ref{fig:varym_fitted_vs_score}, the empirical $\ARL$ is larger than the nominal $\ARL$ in nearly every setting except for the smallest calibration dataset size of $m=10$ and the largest target $\ARL$ value of $10000$.
Interestingly, the empirical $\ARL$ is extremely conservative for the setting with fixed witness functions, while the empirical $\ARL$ is much closer to the nominal $\ARL$ value for the learned witness.
In addition, Figure~\ref{fig:varym_alarm_time_cdf} compares the distribution of null alarm times at $1/\alpha=300$, within the range of thresholds in our experiments, with that of the oracle detector, which uses the exact pre-change expectation.
With $m=30$, the distribution is similar to the oracle's for both the fixed and the learned score, and smaller calibration datasets move some alarms earlier.
Based on these results, our experiments use $m=30$.

\begin{figure}[tbp]
\centering
\includegraphics[width=\linewidth]{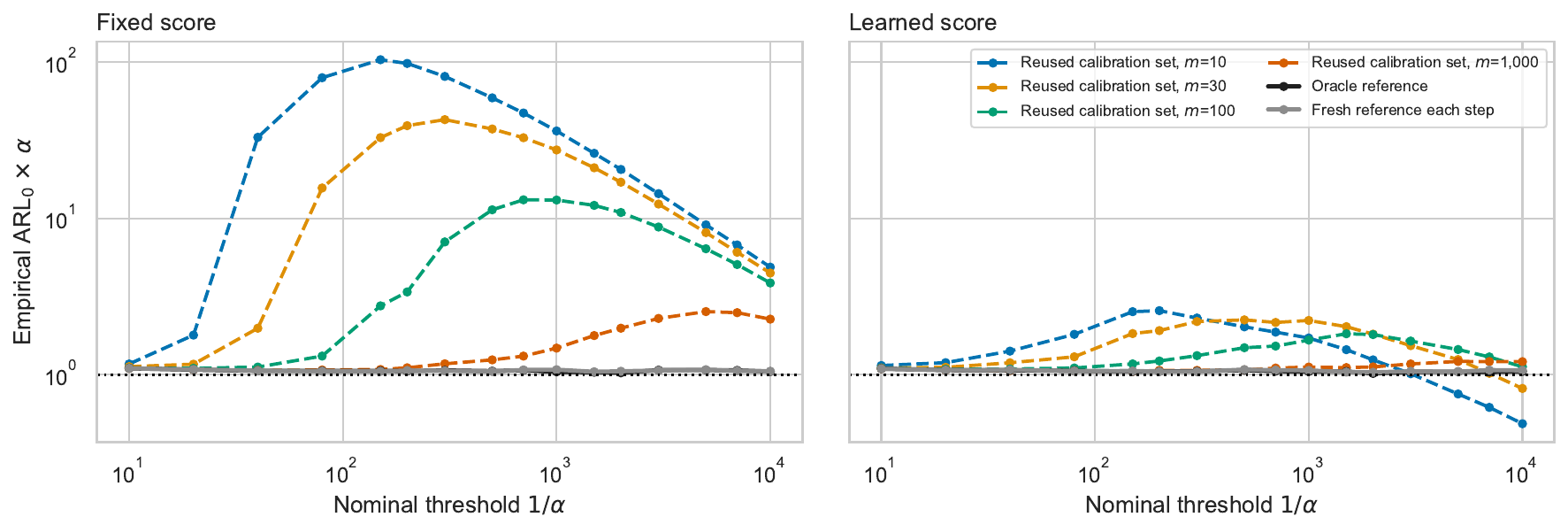}
\caption{Empirical $\ARL$ as a multiple of the nominal $\ARL = 1/\alpha$ with a fixed score (left, $\mathrm{Beta}(2,2)$ scores) and with a learned score from a classifier refit on the stream history (right).
Dashed colored lines reuse a calibration dataset of size $m \in \{10, 30, 100, 1000\}$ at every step, the solid black line uses the oracle expectation $\mathbb{E}_{P_0}[\hat{f}_t(X)]$, and the solid grey line redraws $10$ reference observations at every step.
The dotted line marks the nominal $\ARL$.
Each point summarizes $2{,}000$ null streams, and streams without an alarm after $100{,}000$ steps are counted at $100{,}000$, so each value is a lower bound.}
\label{fig:varym_fitted_vs_score}
\end{figure}

\begin{figure}[H]
\centering
\includegraphics[width=0.8\linewidth]{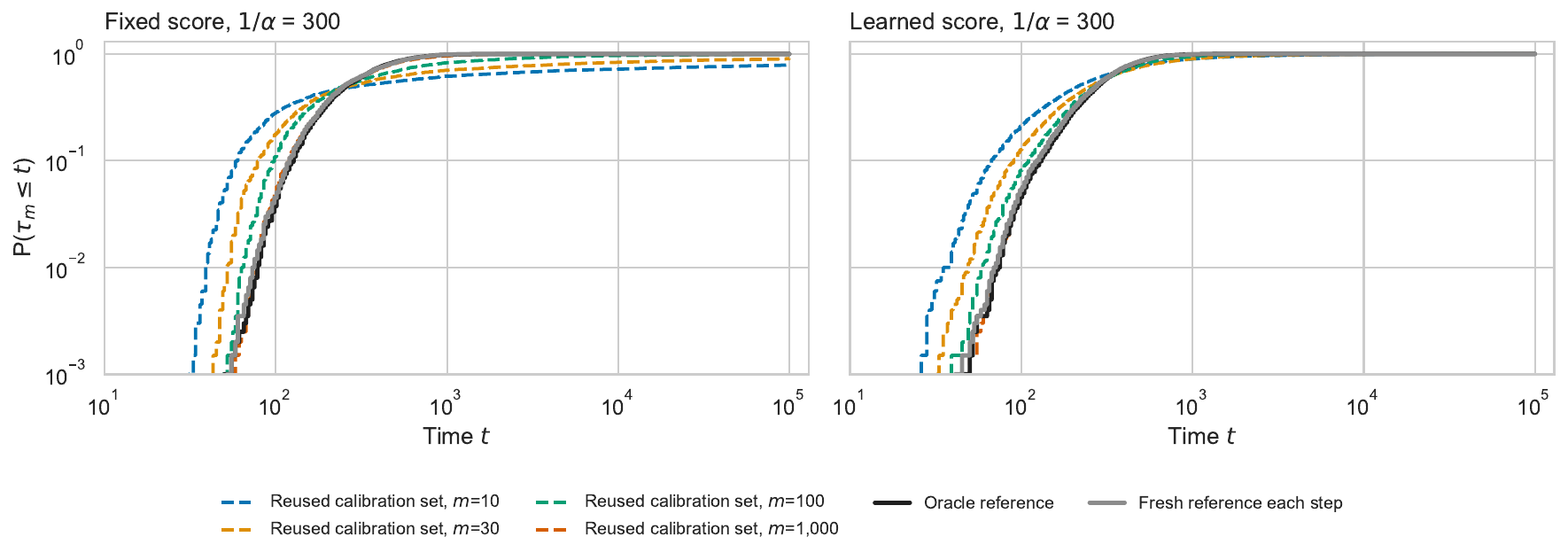}
\caption{Distribution of the null alarm time $\tau_m$ at $1/\alpha = 300$ with a fixed score (left) and a learned score (right).
Every null stream draws its own calibration dataset, so each curve estimates $P(\tau_m \leq t)$ averaged over calibration datasets from $2{,}000$ null streams.
The black curve is the oracle detector, which uses the exact pre-change expectation.
}
\label{fig:varym_alarm_time_cdf}
\end{figure}

\newpage
 
\end{document}